\documentclass[11pt,letterpaper]{article}
\usepackage[margin=1in]{geometry}

\usepackage[T1]{fontenc}
\usepackage[utf8]{inputenc}
\usepackage{lmodern}

\usepackage{graphicx}
\usepackage{xcolor}

\usepackage{amsmath,amssymb,amsfonts,amsthm}
\usepackage{mathtools}
\usepackage{bm}

\usepackage{listings}      

\usepackage[round]{natbib}

\usepackage[hyphens]{url}
\usepackage[colorlinks=true,linkcolor=black,citecolor=black,urlcolor=blue]{hyperref}
\usepackage{cleveref}

\newtheorem{theorem}{Theorem}
\newtheorem{lemma}{Lemma}

\newtheorem{corollary}{Corollary}

\newtheorem{assumption}{Assumption}

\theoremstyle{definition}

\theoremstyle{remark}
\newtheorem{remark}{Remark}

\crefname{theorem}{Theorem}{Theorems}
\Crefname{theorem}{Theorem}{Theorems}
\crefname{lemma}{Lemma}{Lemmas}
\Crefname{lemma}{Lemma}{Lemmas}
\crefname{corollary}{Corollary}{Corollaries}
\Crefname{corollary}{Corollary}{Corollaries}
\crefname{proposition}{Proposition}{Propositions}
\Crefname{proposition}{Proposition}{Propositions}
\crefname{definition}{Definition}{Definitions}
\Crefname{definition}{Definition}{Definitions}
\crefname{assumption}{Assumption}{Assumptions}
\Crefname{assumption}{Assumption}{Assumptions}
\crefname{claim}{Claim}{Claims}
\Crefname{claim}{Claim}{Claims}
\crefname{remark}{Remark}{Remarks}
\Crefname{remark}{Remark}{Remarks}
\crefname{figure}{Figure}{Figures}
\Crefname{figure}{Figure}{Figures}
\crefname{section}{Section}{Sections}
\Crefname{section}{Section}{Sections}

\newcommand{\bbP}{\mathbb{P}}
\newcommand{\vzero}{\mathbf{0}}
\newcommand{\bell}{\bm{\ell}}
\newcommand{\ind}{\mathbf{1}}
\newcommand{\ip}[2]{\left\langle #1, #2 \right \rangle}
\DeclarePairedDelimiter{\norm}{\lVert}{\rVert}
\DeclareMathOperator{\diag}{diag}
\DeclareMathOperator{\poly}{poly}
\DeclareMathOperator{\softmax}{softmax}
\DeclareMathOperator*{\argmax}{arg\,max}
\DeclareMathOperator*{\argmin}{arg\,min}

\AtBeginDocument{\let\cite\citep}

\newcommand{\stexnintlink}[1]{} 
\newcommand{\stexndeletedintlink}[1]{} 
\newcommand{\stexnextlink}[1]{} 
\newcommand{\stexndeletedextlink}[1]{} 

\newcommand{\stexntodo}[1]{} 
\newcommand{\stexnDELETEDtodo}[1]{} 

\newcommand{\bbR}{\mathbb{R}}

\newcommand{\bbE}{\mathbb{E}}

\newcommand{\bfe}{\mathbf{e}}

\newcommand{\bfh}{\mathbf{h}}
\newcommand{\bfH}{\mathbf{H}}

\newcommand{\bfI}{\mathbf{I}}

\newcommand{\bfu}{\mathbf{u}}
\newcommand{\bfU}{\mathbf{U}}

\newcommand{\bfW}{\mathbf{W}}
\newcommand{\bfx}{\mathbf{x}}

\newcommand{\calS}{\mathcal{S}}

\newcommand{\vones}{\mathbf{1}}

\newcommand{\coderepo}{https://github.com/skandaka/onenn-multiclass}

\title{One-Layer Transformer Provably Learns Multiclass\\
One-Nearest Neighbor in Context}

\author{%
  Skanda Athreya\thanks{\texttt{athreya1428@students.d211.org}} \\
  {\normalsize James B. Conant High School}\\
  {\normalsize Hoffman Estates, IL}
  \and
  Yutong Wang\thanks{\texttt{ywang562@illinoistech.edu}} \\
  {\normalsize Illinois Institute of Technology}\\
  {\normalsize Chicago, IL}
}

\date{}

\begin{document}

\maketitle

\begin{abstract}
We extend recent work establishing an equivalence between one-layer transformers and nearest-neighbor classifiers in the binary setting to the multiclass case. By leveraging the simplex encoding, we show that one-layer transformers with an argmax classification head behave identically to a one-nearest-neighbor classifier in the multiclass setting. This closes a gap left by prior work, whose multiclass result relied on a non-standard rounding-based approach rather than the typical argmax head used in practice.

\end{abstract}

\begin{center}
\textbf{Code:}\ \href{\coderepo}{\texttt{\coderepo}}
\end{center}

\section{Introduction}

Transformers have become the dominant architecture in large language models and other applications such as computer vision. Understanding the theoretical properties of transformers is therefore of considerable importance. A growing body of work studies theoretical properties of transformers, e.g., in the framework of in-context learning \cite{bai2023transformers},
and their connection to classical machine learning models such as nearest
neighbors \cite{li2024onelayer}.

\citet{li2024onelayer} showed that one-layer transformers, under appropriate conditions, behave identically to a one-nearest-neighbor classifier. Their work is, however, limited to the binary classification setting. While they do provide a result with respect to a task resembling multiclass classification \cite[Corollary 1]{li2024onelayer}, the result is for a rounding-based approach that departs from  the more standard argmax classification head used in practice.

In this work, we close this gap by leveraging the simplex encoding \cite{mroueh2012multiclass}.
We show that one-layer transformers with an argmax head are equivalent to one-nearest-neighbor classifiers in the multiclass setting.
This is a natural and important extension, since the primary application of transformers (\emph{i.e.,} next-token prediction) is inherently a multiclass classification problem, and the argmax head is the standard mechanism for producing predictions.


\section{Related works}
\subsection{Theory for transformers' generalization}
In-context learning (ICL) is the ability of a trained transformer to solve a new task from a  handful of input/output examples placed in its prompt, without any updates to its weights \cite{bai2023transformers}. Understanding how and why this behavior emerges from ordinary training is a central open problem in transformer theory, and a growing amount of work has aimed to investigate it in controlled settings.

One influential line of work casts ICL as implicitly running a learning algorithm on the in-context examples: e.g., \citet{zhang2023trained} proves that the behavior of a single-layer attention model trained on linear-regression prompts provably converges to a predictor with provable in-distribution and distribution shift guarantees.
In a similar vein, the one-nearest neighbor \cite{li2024onelayer} and our work
show that transformers can implicitly act like a one-nearest neighbor algorithm.

\subsection{Simplex encoding}
Representing $K$ classes as vectors is an important algorithmic design choice which decides how to convert a classifier's real-valued output to a discrete decision.
The simplex encoding assigns each class a vertex of a regular simplex, so that distinct classes are equidistant and symmetric.

\citet{mroueh2012multiclass} introduced this coding for multiclass learning and analyzed its geometric properties. Later, \citet{pouliot2018equivalence} established equivalences between simplex-based methods and other multi-category support vector machine formulations \cite{lee2004multicategory}.
The simplex encoding has also been applied fruitfully in representation learning \cite{papyan2020prevalence}.

\subsection{Nearest neighbor algorithms}
The nearest-neighbor algorithm is one of the oldest and most well-studied methods in classification.
\citet{cover1967nearest} established foundational results on its convergence properties, showing that the risk of the one-nearest-neighbor classifier is bounded above by twice the Bayes risk as the number of samples tends to infinity.
Subsequent work established consistency and finite-sample rates for the $k$-nearest-neighbor rule under margin and smoothness conditions: \citet{chaudhuri2014rates} gives distribution-dependent, finite-sample rates in metric spaces and shows that under the Tsybakov margin condition the nearest-neighbor rate matches known lower bounds for nonparametric classification, and \citet{efremenko2020fast} construct a Bayes-consistent classifier that keeps fast query time while attaining rates comparable to minimax.
The margin event that controls our distribution-shift result (\cref{sec:shift}) plays a role akin to classical margin conditions, where performance is controlled precisely when queries are unlikely to fall close to a boundary between classes.

A separate line of work connects nearest-neighbor and kernel methods to attention. \citet{tsai2019transformer} formulate attention as a kernel smoother over the inputs, with the kernel scores given by input similarities.
This perspective supports the equivalence between attention and the nearest-neighbor rule that we study.




Beyond its role in classical statistics, the nearest-neighbor rule has re-emerged as an important component of modern language models.
\citet{khandelwal2020generalization} introduced $k$NN-LM which is an ``interpolation'' of transformers and \(k\)-nearest neighbor algorithms.
\citet{xu2023why} analyzed why the retrieval-like component in $k$NN-LM helps generalization. These results indicate that nearest-neighbor 
is a fruitful avenue to understanding transformer-based language models.



\begin{figure}[t]
  \centering
  \includegraphics[width=\textwidth]{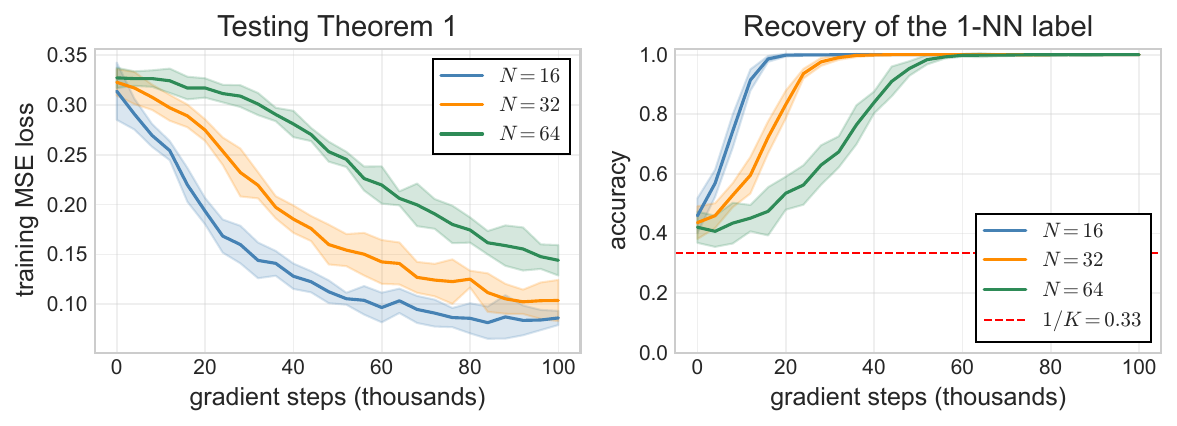}
  \caption{Convergence (\cref{thm:convergence}). Training loss (left) decreases
  and separated-test argmax accuracy (right) approaches $1$, faster for smaller
  context length $N$, matching the predicted dependence on $N$. Bands are
  $\pm 2$ standard deviations over $8$ seeds.
See
\Cref{sec:experiments} for experimental details.
}
  \label{fig:convergence}
\end{figure}

\begin{figure}[t]
  \centering
\includegraphics[width=\textwidth]{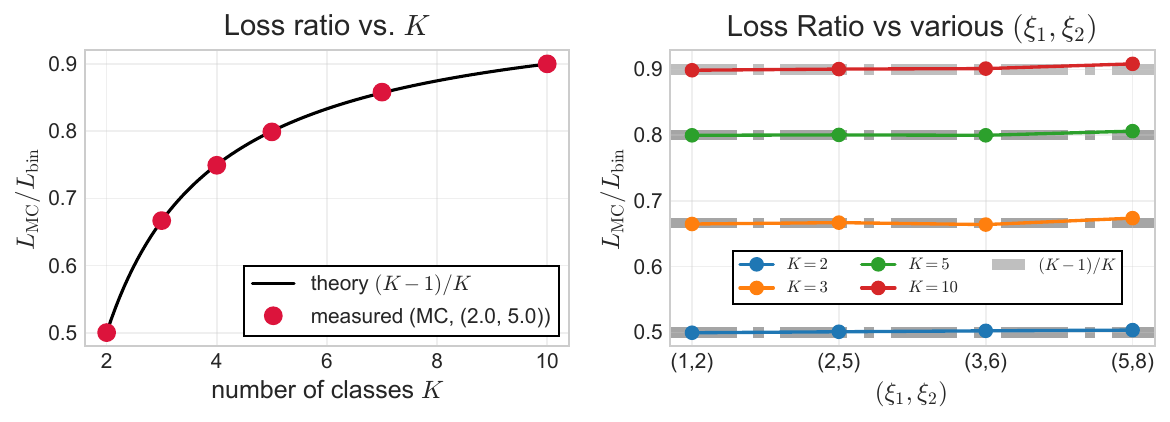}
  \caption{Scaling identity (\cref{lem:scaling}). The measured ratio
  $L_{\mathrm{MC}}/L_{\mathrm{bin}}$, estimated by sampling directly from the
  expected-version of the loss definitions, matches $(K-1)/K$ as a function of $K$ (left) and is
  constant in $(\xi_1,\xi_2)$ (right).}
  \label{fig:scaling}
\end{figure}

\section{Setup and assumptions}\label{sec:setup}
Throughout this work, let \(N\) be the number of labeled examples in a prompt
(the context length) and \(K\) the number of classes, both integers. Let
\([N] := \{ 1,\dots, N\}\).
Moreover, let \(d\) be the dimension of the input token.

\subsection{Problem Setup}
An \emph{in-context learning instance} consists of a prompt of \(N\) labeled examples
\(\{(\bfx_i, c_i)\}_{i \in [N]}\) and a query \(\bfx_{N+1}\), where each
\(\bfx_i \in \calS^{d-1}\) (the unit sphere in \(\bbR^d\)) and each class label
\(c_i \in [K]\).
The prompt
\(\{(\bfx_i, c_i)\}_{i \in [N]}\) and the query
\(\bfx_{N+1}\) are known to the learner, while the 
label \(c_{N+1}\) of the
query is hidden from the learner.
The objective is to predict \(c_{N+1}\).

Throughout we reserve \(y\) for the binary \(\{+1,-1\}\) label encoding used by
\citet{li2024onelayer} and write \(c\) for a multiclass label in \([K]\).
For the given prompt, we define the index of the nearest neighbor to the query by
\[i^* \coloneqq \argmin_{j \in [N]} \norm{\bfx_{N+1} - \bfx_j}_2.\]
Thus, the \emph{one-nearest-neighbor (1-NN) predictor} 
for the prompt
is \(c_{i^*}\).

\paragraph{Simplex encoding.}
To represent \(K\) classes as vectors, we use the simplex encoding
\cite{mroueh2012multiclass}, implemented as  centered one-hot vectors:
\begin{equation}\label{eq:simplex}
\bfu_c \coloneqq \bfe_c - \tfrac{1}{K}\vones_K \in \bbR^K, \qquad c \in [K],
\end{equation}
where \(\bfe_c\) is the \(c\)-th one-hot vector and
\(\vones_K\) is the all-ones vector. The vectors
\(\{\bfu_c\}_{c \in [K]}\) are the vertices of a regular simplex centered at the origin and satisfy:
\begin{itemize}
  \item (zero mean)
        \(\sum_{c \in [K]} \bfu_c = \vzero_K\),
  \item (constant norm)
        \(\norm{\bfu_c}_2^2 = \tfrac{K-1}{K}\) for each \(c \in [K]\), and
  \item (constant inner product)
        \(\ip{\bfu_c}{\bfu_{c'}} = -\tfrac{1}{K}\) for \(c \neq c'\).
\end{itemize}
        Consequently, the \(\bfu_c\)'s are
        equidistant with distance \[\norm{\bfu_c - \bfu_{c'}}_2 = \sqrt{2}.\]
        These three properties are the only structural facts about the
encoding that the convergence proof uses, and they act as the multiclass
counterparts of the \(\{+1,-1\}\) label encoding used in the binary case in
\cite{li2024onelayer}.

\paragraph{Prompt Embedding.}
Following \cite{li2024onelayer}, we embed the prompt and query as 
the following matrix 
\begin{equation}\label{eq:embedding}
\bfH =
\begin{pmatrix}
\bfx_1 & \cdots & \bfx_N & \bfx_{N+1}\\
\bfu_{c_1} & \cdots & \bfu_{c_N} & \vzero_K\\
0 & \cdots & 0 & 1
\end{pmatrix}
\in \bbR^{(d+K+1)\times(N+1)}
\end{equation}
Note that the second row of \(\bfH\), representing the labels, consists of columns that are \(K\)-dimensional vectors, in contrast to that of \cite[Eq.~(2.1)]{li2024onelayer}.
Denote the columns of \(\bfH\) by
\(\bfh_1, \dots, \bfh_{N+1}\), so that \(\bfh_j = [\bfx_j; \bfu_{c_j}; 0]\) for
\(j \in [N]\) and \(\bfh_{N+1} = [\bfx_{N+1}; \vzero_K; 1]\).
We note that the query's label is 
\(\vzero_K\) since its label is unknown to the learner.

\paragraph{Model.}
The one-layer softmax
attention model has a single weight matrix
\(\bfW \in \bbR^{(d+K+1)\times(d+K+1)}\) as parameter. The model
takes \(\bfH\) as input and outputs
\begin{equation}\label{eq:forward}
\bfH_{\bfW} := \bfH \cdot \softmax(\bfH^\top \bfW \bfH),
\end{equation}
where the softmax is applied column-wise.
Compared with \cite[Eq.~(2.2)]{li2024onelayer}, which writes the attention score as \(\bfH^\top \bfW_K^\top \bfW_Q \bfH\) with separate key and query matrices before merging them into a single matrix, we parametrize the model directly by the merged matrix \(\bfW\), playing the role of \(\bfW_K^\top \bfW_Q\): the model depends on the key and query matrices only through this product, and analyzing the merged matrix is the standard reduction in this line of work \cite{li2024onelayer, zhang2023trained}.

As in \cite{li2024onelayer}, the value matrix is frozen to the identity, so each output token is a convex combination of the input tokens.
The prediction is the \(K\)-dimensional
vector
obtained by taking the 
\((d+1)\)-st to \((d+K)\)-th rows of the last column of 
\(\bfH_{\bfW}\):
\begin{align}
\bell_{\bfW}(\bfx_{N+1}) &= [\bfH_{\bfW}]_{d+1:d+K,\, N+1} \nonumber\\
&= \sum_{j=1}^{N} q_j(\bfx, \bfW)\, \bfu_{c_j} \in \bbR^K,
\label{eq:logits}
\end{align}
where \(\bfx \coloneqq (\bfx_1, \dots, \bfx_{N+1})\) collects the prompt
inputs and the query, and the attention weights are
\begin{equation}\label{eq:attn}
q_j(\bfx,\bfW)
= \frac{\exp(\bfh_j^\top \bfW \bfh_{N+1})}
       {\sum_{l=1}^{N+1}\exp(\bfh_l^\top \bfW \bfh_{N+1})}.
\end{equation}
The predicted class is produced by the standard \emph{argmax head}
\begin{equation}\label{eq:argmax}
\hat{c}_{\bfW}(\bfx_{N+1}) \coloneqq \argmax_{k \in [K]}\,
[\bell_{\bfW}(\bfx_{N+1})]_k .
\end{equation}

\subsection{Assumptions on training distribution}
\label{sec:assumptions-on-training-distribution}
\begin{assumption}[Training distribution]\label{ass:train}
The inputs \(\{\bfx_i\}_{i \in [N+1]}\) are sampled i.i.d.\ from the uniform
distribution on \(\calS^{d-1}\), and the class labels \(c_i\) are sampled uniformly
from \([K]\), independently across tokens and of the inputs. We write
\(P^{\mathrm{train}}\) for the resulting joint law of the instance
\((\{(\bfx_i, c_i)\}_{i \in [N]}, \bfx_{N+1})\).

Note that the following hold:
\begin{enumerate}
  \item
\(\bbE[\bfu_{c_i} \mid \bfx_{1:N+1}] = \vzero_K\),
  \item
\(\bbE[\bfu_{c_i}\bfu_{c_j}^\top \mid \bfx_{1:N+1}] = \vzero_{K\times K}\) for
        \(i \neq j\), and
  \item
        the law of \(\bfx_{1:N+1}\) is invariant under
\(\bfx_{1:N+1} \mapsto -\bfx_{1:N+1}\).
\end{enumerate}
\end{assumption}

\paragraph{Expectations.} An expectation \(\bbE[\,\cdot\,]\) or probability
\(\bbP(\,\cdot\,)\) written without a subscript is always taken with respect to
the training distribution \(P^{\mathrm{train}}\) of \cref{ass:train}. Every other
law carries an explicit subscript, as in \eqref{eq:loss} and \cref{thm:shift}.

\begin{remark}\label{rem:train-dist}
\cref{ass:train} is the multiclass counterpart of the training distribution
of \cite[Assumption~1]{li2024onelayer}, adopted for the same reasons. Because
the labels are independent of the inputs, the prompt contains no parametric
relationship between \(\bfx\) and \(c\) that the model could exploit
instead. The only way to reduce the loss is to retrieve the label of the
nearest context example.

The task is nonetheless nontrivial: each prompt induces a geometrically complicated partition of the sphere. Moreover, the resulting objective
is nonconvex (we show this in \cref{lem:nonconvex}). Moreover, the analysis uses
\cref{ass:train} only through properties 1,2, and 3, which decouple the label
codes from the attention dynamics and force the off-diagonal gradient blocks
to vanish.
\end{remark}

\begin{assumption}[Initialization]\label{ass:init}
For a parameter \(\sigma > 0\), gradient descent is initialized at
\begin{equation*}
\bfW^0 = \begin{pmatrix}
\vzero_{(d+K)\times(d+K)} & \vzero_{d+K} \\
\vzero_{d+K}^\top & -\sigma
\end{pmatrix} \in \bbR^{(d+K+1)\times(d+K+1)},
\end{equation*}
that is, every entry vanishes except the final diagonal entry, which equals
\(-\sigma\) and suppresses the query's self-attention.
\end{assumption}

The initialization will be exploited as follows. First, the entry \(-\sigma\) plays
the role of the masking used in practical self-attention training, by
suppressing the query's attention to its own token, whose label slot is
\(\vzero_K\), so that attention concentrates on the labeled context
examples. \cite{li2024onelayer} motivate the same initialization this way,
with \(\sigma\) playing the role of a mask value that is effectively set to
infinity in practice.

Second, it is similar to
\citet{trockman2023mimetic}, who observe that in pretrained transformers the
merged key--query product has a dominant positive diagonal, approximately a
scaled identity, and show that imposing this structure at initialization
improves trainability.
Gradient descent started at \(\bfW^0\) remains on the
``diagonal family'' \(\diag\{\xi_1^t\bfI_d,\, \vzero_{K\times K},\, -\xi_2^t\}\) with
\(\xi_1^t\) increasing, as shown in the appendix, so the trained key--query
block remains a multiple of the identity throughout training.

\paragraph{Training.}
We train \(\bfW\) with gradient descent on the expected (population)
mean-squared error between the logits and the simplex code of the 1-NN
label,
\begin{equation}\label{eq:loss}
L(\bfW) = \tfrac{1}{2}\,\bbE_{P^{\mathrm{train}}}\big[\,
\norm{\bell_{\bfW}(\bfx_{N+1}) - \bfu_{c_{i^*}}}_2^2 \,\big],
\end{equation}
initialized according to \cref{ass:init}.


\section{Main results}
This section presents our main theoretical results. Omitted proofs are all included in the appendix.
The analysis reduces the dynamics to a two-parameter family of weight matrices,
which \cref{ass:init} initializes and which gradient descent provably preserves.

\begin{lemma}[Two-parameter reduction]\label{lem:2d}
Under \cref{ass:train,ass:init} there exist real sequences
\(\{\xi_1^t\}_{t\ge0}\) and \(\{\xi_2^t\}_{t\ge0}\) with
\(\bfW^t = \diag\{\xi_1^t\bfI_d,\, \vzero_{K\times K},\, -\xi_2^t\}\) for all
\(t \ge 0\). Moreover the blocks \(\bfW_{12}, \bfW_{22}, \bfW_{32}\), which
multiply the query's zero label block, have identically vanishing gradient,
so the loss does not depend on them at any point of the parameter space.
\end{lemma}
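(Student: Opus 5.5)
The proof is by induction on \(t\). The base case is \cref{ass:init}, with \(\xi_1^0 = 0\) and \(\xi_2^0 = \sigma\). For the inductive step, suppose \(\bfW = \diag\{\xi_1\bfI_d, \vzero_{K\times K}, -\xi_2\}\). It suffices to show that \(\nabla L(\bfW)\) again has the form \(\diag\{a\bfI_d, \vzero_{K\times K}, b\}\); the gradient step \(\bfW^{t+1} = \bfW^t - \eta\nabla L(\bfW^t)\) then stays in the family.

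First I compute the gradient through the scores \(s_j = \bfh_j^\top \bfW \bfh_{N+1}\). Write \(\bfu_{N+1} := \vzero_K\) for the query's label slot, and set \(g := \bell_{\bfW} - \bfu_{c_{i^*}}\). Since \(\bell_{\bfW} = \sum_{j\le N+1} q_j \bfu_{c_j}\), the softmax derivative gives \(\partial \bell_{\bfW}/\partial s_j = q_j(\bfu_{c_j} - \bell_{\bfW})\). Hence
\[
\nabla L(\bfW) = \bbE\Big[\sum_{j=1}^{N+1} q_j\, \ip{\bfu_{c_j} - \bell_{\bfW}}{g}\, \bfh_j \bfh_{N+1}^\top\Big].
\]
This formula holds at every \(\bfW\), not only on the diagonal family. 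Because \(\bfh_{N+1} = [\bfx_{N+1}; \vzero_K; 1]\), every term \(\bfh_j\bfh_{N+1}^\top\) has a vanishing middle block column. So the gradient blocks \(\bfW_{12}, \bfW_{22}, \bfW_{32}\) are identically zero. In fact \(s_j\) does not depend on these blocks at all, so neither does \(L\); this proves the second claim of the lemma directly.

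Next I restrict to the diagonal family. There \(s_j = \xi_1\ip{\bfx_j}{\bfx_{N+1}}\) for \(j\le N\) and \(s_{N+1} = \xi_1 - \xi_2\), so the weights \(q_j\) depend only on \(\bfx_{1:N+1}\). The nearest-neighbor index \(i^*\) is also a function of the inputs alone. I then treat the remaining blocks by symmetry, each block using one invariance.

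\begin{itemize}
\item \emph{Label row blocks \(\bfW_{21}, \bfW_{23}\).} Their entries are \(\bbE[q_j\, \ip{\bfu_{c_j}-\bell_{\bfW}}{g}\, \bfu_{c_j}\, z^\top]\), where \(z\) is \(\bfx_{N+1}\) or \(1\). Relabel the classes by a permutation \(\pi\) of \([K]\). This maps \(\bfu_c \mapsto P_\pi \bfu_c\) and preserves the label law and all inner products. Hence the block \(G\) satisfies \(G = P_\pi G\) for every \(\pi\), so each column of \(G\) is a multiple of \(\vones_K\). But every column of \(G\) lies in \(\vones_K^\perp\), since all \(\bfu_c\) are zero-mean. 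Therefore \(G = 0\).
\item \emph{Cross blocks \(\bfW_{13}, \bfW_{31}\).} These are expectations of a scalar, which is even in \(\bfx_{1:N+1}\), times a single factor \(\bfx_j\) or \(\bfx_{N+1}\). The scalar is even because \(q\) and \(i^*\) are unchanged under \(\bfx \mapsto -\bfx\). By property 3 of \cref{ass:train}, these blocks vanish.
\item \emph{Block \(\bfW_{11}\).} This is \(\bbE[\phi\, \bfx_j \bfx_{N+1}^\top]\), where \(\phi\) depends on the inputs only through their inner products. For any orthogonal \(O\), the uniform law on \(\calS^{d-1}\) is invariant under \(\bfx_i \mapsto O\bfx_i\). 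This gives \(G_{11} = O G_{11} O^\top\) for all \(O\), so \(G_{11}\) is a multiple of \(\bfI_d\).
\item \emph{Block \(\bfW_{33}\).} This is a scalar, so it imposes no constraint.
\end{itemize}

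Together these show \(\nabla L(\bfW) = \diag\{a\bfI_d, \vzero_{K\times K}, b\}\), which closes the induction.

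The step I expect to be delicate is the label-row blocks. Properties 1 and 2 of \cref{ass:train} alone do not obviously kill them, because the integrand is cubic in the label codes and involves the third moments of the \(\bfu_c\). The permutation-invariance argument above avoids computing these moments. It does rely on the full uniform-label law rather than on properties 1 and 2 only, so I would double-check that this is acceptable. The orthogonal-invariance step for \(\bfW_{11}\) also needs care: I must confirm that \(i^*\) and the \(q_j\) are genuinely rotation invariant. They are, because \(\norm{\bfx_{N+1}-\bfx_j}\) is a function of \(\ip{\bfx_j}{\bfx_{N+1}}\) on the sphere.
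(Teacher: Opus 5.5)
Your proof is correct and follows the paper's skeleton. Both arguments share four ingredients: induction on $t$; the middle block column dropping out of every score because $\bfh_{N+1}$ has $\vzero_K$ in its label slot, which holds at every $\bfW$ and gives the second claim; the sign-flip symmetry for $\bfW_{31}$; and rotational invariance for $\bfW_{11}$. You depart from the paper in two places.

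\emph{Label rows $\bfW_{21},\bfW_{23}$.} The paper expands the integrand into monomials in the codes and evaluates each with \cref{lem:simplex-props}. Your worry about third moments does not materialize on that route. The codes only ever enter contracted, as $\ip{\bfu_{c_{j_1}}}{\bfu_{c_{j_2}}}\bfu_{c_{j_3}}$. The fully diagonal case $j_1=j_2=j_3$ equals $\tfrac{K-1}{K}\bfu_{c_{j_1}}$ by the constant norm, so it has zero mean. The mixed cases factor by conditional independence of the labels. This needs genuine independence, not just the uncorrelatedness of property 2; for example, $\bfu_{c_{j_1}}\bfu_{c_{j_1}}^\top\bfu_{c_{j_2}}$ requires it.

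Your class-relabeling argument replaces this bookkeeping with one equivariance observation plus $\bfu_c\perp\vones_K$. Using the exchangeable uniform label law is legitimate, since \cref{ass:train} assumes exactly that. The two routes buy different things. The moment route works for any zero-mean, constant-norm code without needing a symmetry group. Yours never uses that the loss is polynomial in the codes, so it carries over unchanged to any relabeling-equivariant loss, such as cross entropy.

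\emph{Block $\bfW_{13}$.} Your sign-flip argument is the right one. The paper groups $\bfW_{13}$ with the label rows, but its gradient is $\bbE[\sum_{j\le N+1} q_j\ip{\bfu_{c_j}-\bell}{\bell-\bfu_{c_{i^*}}}\bfx_j]$. The label-dependent coefficient there is quadratic in the codes and has nonzero conditional mean; the $j=i^*$ term, for instance, is $-q_{i^*}\norm{\bell-\bfu_{c_{i^*}}}_2^2$. So the monomial argument does not dispose of this block, and only the oddness in $\bfx$ does.
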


Write \(L_{\mathrm{MC}}(\xi_1,\xi_2) \coloneqq L(\bfW)\) for the loss
\eqref{eq:loss} restricted to this 2-parameter space.
Next, we briefly recall the binary case from \cite{li2024onelayer}.
Let
\(L_{\mathrm{bin}}(\xi_1,\xi_2)\) be the binary objective from
\cite[Eq.~2.5]{li2024onelayer}, which we recall below.

For the binary case, the prompt and query are the same as in \cref{ass:train}. The
simplex codes \(\bfu_{c_j}\) are however replaced by scalar labels
\(y_j \in \{\pm1\}\) drawn uniformly and independently of the inputs, the logit
\(\ell_{\bfW}(\bfx_{N+1}) = \sum_{j=1}^{N} q_j(\bfx,\bfW)\, y_j\) is a scalar,
and the target is \(y_{i^*}\), so that
\begin{equation}\label{eq:lbin}
L_{\mathrm{bin}}(\xi_1,\xi_2)
= \tfrac{1}{2}\,\bbE\big[\big(\ell_{\bfW}(\bfx_{N+1}) - y_{i^*}\big)^2\big].
\end{equation}
The attention weights \(q_j\) are identical in the two models, because by
\cref{lem:2d} they depend on the inputs only through their inner products and
not on any label.

Now, the two losses are related by the identity:

\begin{lemma}[Scaling identity]\label{lem:scaling}
On the two-parameter family,
\(L_{\mathrm{MC}}(\xi_1,\xi_2) = \tfrac{K-1}{K}\,L_{\mathrm{bin}}(\xi_1,\xi_2)\).
Consequently \(\partial_{\xi_m} L_{\mathrm{MC}} = \tfrac{K-1}{K}\,
\partial_{\xi_m} L_{\mathrm{bin}}\) for \(m \in \{1,2\}\).
\end{lemma}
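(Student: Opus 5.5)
We condition on the inputs \(\bfx = \bfx_{1:N+1}\), expand the squared norm, and use the simplex-code second moments. On the two-parameter family, \cref{lem:2d} shows that the attention weights \(q_j = q_j(\bfx,\bfW)\) depend only on \(\bfx\) and \((\xi_1,\xi_2)\), not on the labels. The nearest-neighbor index \(i^*\) is also a function of \(\bfx\) alone. Writing \(q_{i^*}\) for the weight on the nearest neighbor, we expand
\[
\norm{\bell_{\bfW} - \bfu_{c_{i^*}}}_2^2
= \sum_{j,l\in[N]} q_j q_l \ip{\bfu_{c_j}}{\bfu_{c_l}}
- 2\sum_{j\in[N]} q_j \ip{\bfu_{c_j}}{\bfu_{c_{i^*}}}
+ \norm{\bfu_{c_{i^*}}}_2^2 .
\]
We then take the conditional expectation given \(\bfx\), with the labels i.i.d.\ uniform and independent of \(\bfx\) by \cref{ass:train}.

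The key input is the pair of moment identities \(\bbE\ip{\bfu_{c_j}}{\bfu_{c_l}} = 0\) for \(j\neq l\) and \(\bbE\norm{\bfu_{c_j}}_2^2 = \tfrac{K-1}{K}\). The first follows from property 2 of \cref{ass:train} by taking the trace, or directly from independence and the zero-mean property. The second is the constant-norm property. Hence
\[
\bbE\big[\norm{\bell_{\bfW}-\bfu_{c_{i^*}}}_2^2 \,\big|\, \bfx\big]
= \tfrac{K-1}{K}\Big(\sum_{j} q_j^2 - 2 q_{i^*} + 1\Big).
\]
The binary computation is identical with \(y_j\in\{\pm1\}\) i.i.d.\ uniform, for which \(\bbE[y_jy_l]=\delta_{jl}\). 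It gives
\[
\bbE\big[(\ell_{\bfW}-y_{i^*})^2 \,\big|\, \bfx\big] = \sum_j q_j^2 - 2q_{i^*} + 1 .
\]
Taking the outer expectation over \(\bfx\), which has the same law in both models, and multiplying by \(\tfrac12\) yields \(L_{\mathrm{MC}} = \tfrac{K-1}{K} L_{\mathrm{bin}}\) pointwise in \((\xi_1,\xi_2)\).

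For the derivative statement, the identity holds as an equality of functions on all of \(\bbR^2\), so differentiating both sides in \(\xi_m\) gives the claim. Differentiability of \(L_{\mathrm{bin}}\) is inherited from \cite{li2024onelayer}. It can also be seen directly: the integrand \(\sum_j q_j^2 - 2q_{i^*}+1\) is smooth in \((\xi_1,\xi_2)\) with bounded derivatives on compact sets, since \(\bfx\) lies on the sphere, so dominated convergence applies.

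The main point to get right is the decoupling step. We must check carefully that the \(q_j\) on the restricted family truly carry no label dependence. The scores are \(\bfh_j^\top\bfW\bfh_{N+1} = \xi_1\ip{\bfx_j}{\bfx_{N+1}}\) for \(j\le N\) and \(\xi_1 - \xi_2\) for the query, since the \(\vzero_{K\times K}\) block annihilates the label coordinates. The other required check is that \(i^*\) is measurable with respect to \(\bfx\), so that conditioning on \(\bfx\) legitimately fixes it. Ties occur with probability zero under the uniform law, so \(i^*\) is almost surely well defined. With these in hand, the rest is the routine second-moment computation above.
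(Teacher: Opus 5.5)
Your proposal is correct and follows essentially the same route as the paper's proof. Both expand the squared error into quadratic, cross, and constant terms, and use the simplex second moments ($\tfrac{K-1}{K}$ on the diagonal, zero off it) to match each term to $\tfrac{K-1}{K}$ times its binary counterpart, then differentiate. Your explicit conditioning on $\bfx_{1:N+1}$, with $i^*$ and the $q_j$ being $\bfx$-measurable, and your dominated-convergence justification of differentiability make rigorous two steps that the paper handles informally.
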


Every increment bound in the convergence analysis is therefore the binary
bound scaled by \(\tfrac{K-1}{K}\). The factor rescales the growth of
\((\xi_1, \xi_2)\) but cancels in the ratio that controls the trajectory.

\begin{theorem}[Convergence]\label{thm:convergence}
Suppose \cref{ass:train,ass:init} hold with \(N \ge O(\sqrt{d}\log d)\) and
\begin{align*}
\sigma > 2\max\Big\{\log(Nd),\;&
-\log\big(1-(N\sqrt{d})^{-1/d}\big),\;
                                 \\&
C_d\big(1-\tfrac{1}{2N}\big)\Big\},
\end{align*}
where \(C_d = \poly(d)\). These are the conditions of
\cite[Theorem~1]{li2024onelayer}, which we inherit unchanged.\footnote{%
\cite[Theorem~1]{li2024onelayer} print the middle term with exponent
\(1/d\). Since \((N\sqrt d)^{1/d} > 1\), the logarithm is then applied to a
negative number. We state the well-posed form with exponent \(-1/d\), which is
the quantity their Lemma~12 identifies this condition with, namely
\(-\log a_{N,d}\) up to constants.}
Then gradient descent on \eqref{eq:loss} satisfies
\begin{equation*}
L(\bfW^t) \le O\!\left(\frac{K-1}{K}\cdot
\frac{\poly(N,d)}{\log t}\right),
\end{equation*}
so \(L(\bfW^t)\) converges to \(0\) as \(t \to \infty\).
\end{theorem}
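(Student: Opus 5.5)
We reduce \cref{thm:convergence} to the binary convergence analysis of \cite[Theorem~1]{li2024onelayer} via \cref{lem:2d} and \cref{lem:scaling}. By \cref{lem:2d}, the iterates stay on the diagonal family \(\bfW^t = \diag\{\xi_1^t\bfI_d, \vzero_{K\times K}, -\xi_2^t\}\). Gradient descent on \(L\) therefore coincides with gradient descent on \(L_{\mathrm{MC}}(\xi_1,\xi_2)\), up to the fixed multiplicities with which \(\xi_1\) and \(\xi_2\) appear in \(\bfW\). These multiplicities are the same in the binary model, since there too the \(\xi_1\) block is \(\bfI_d\) and only the final diagonal entry carries \(\xi_2\). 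By \cref{lem:scaling}, \(\nabla L_{\mathrm{MC}} = \tfrac{K-1}{K}\nabla L_{\mathrm{bin}}\) pointwise. Hence the multiclass trajectory with step size \(\eta\) is exactly the binary trajectory of \cite{li2024onelayer} run with step size \(\eta' = \tfrac{K-1}{K}\eta\), from the same initialization \((\xi_1^0,\xi_2^0) = (0,\sigma)\).

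Next we check that the hypotheses of \cite[Theorem~1]{li2024onelayer} survive this rescaled step size. The conditions on \(N\), \(d\), and \(\sigma\) are inherited verbatim and concern only the initialization and distribution, so they are unaffected. The step-size condition, if the binary theorem imposes an upper bound on \(\eta\), is only easier to meet, because \(\tfrac{K-1}{K} < 1\). The main obstacle is that the binary bound \(L_{\mathrm{bin}}(\xi_1^t,\xi_2^t) \le O(\poly(N,d)/\log t)\) is proved by tracking explicit lower bounds on the increments of \(\xi_1^t\) and control of the ratio \(\xi_2^t/\xi_1^t\). Since these increments are all proportional to the step size, we will re-run the binary invariants with \(\eta'\) in place of \(\eta\). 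The key observation is that the factor \(\tfrac{K-1}{K}\) multiplies both \(\Delta\xi_1\) and \(\Delta\xi_2\), so it cancels in the ratio that governs the trajectory. The invariant \(\xi_2^t \ge c\,\xi_1^t\) (or whatever ratio condition the binary proof maintains) is therefore preserved by the identical inductive argument.

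For the growth of \(\xi_1^t\), the binary argument gives \(\xi_1^t \gtrsim \log(\eta' t)\) up to \(\poly(N,d)\) factors. With \(\eta' \ge \tfrac12\eta\) for \(K\ge2\), this changes only absolute constants inside the logarithm, so the rate \(1/\log t\) is unaffected. This yields \(L_{\mathrm{bin}}(\xi_1^t,\xi_2^t) \le O(\poly(N,d)/\log t)\) along our trajectory. Finally, applying \cref{lem:scaling} once more gives \(L(\bfW^t) = L_{\mathrm{MC}}(\xi_1^t,\xi_2^t) = \tfrac{K-1}{K}L_{\mathrm{bin}}(\xi_1^t,\xi_2^t) \le O\big(\tfrac{K-1}{K}\cdot\tfrac{\poly(N,d)}{\log t}\big)\), which tends to \(0\).
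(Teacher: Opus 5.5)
Your proposal is correct and follows essentially the paper's route. Like the paper, you use \cref{lem:2d} and \cref{lem:scaling} to transfer the binary analysis of \cite{li2024onelayer}, note that the factor $\tfrac{K-1}{K}$ cancels in the ratio bound and only shifts the logarithmic growth of $(\xi_1^t,\xi_2^t)$ by a bounded offset, and then multiply the binary loss bound by $\tfrac{K-1}{K}$; your rescaled step size $\eta' = \tfrac{K-1}{K}\eta$ simply packages in one step what the paper does lemma by lemma (\cref{lem:xi1-incr} through \cref{lem:growth}).
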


\subsection{Behavior Under Distribution Shift and the Argmax Head}
\label{sec:shift}
For \(\delta > 0\) define the event
\begin{multline}\label{eq:margin}
A_\delta \coloneqq \big\{\, \norm{\bfx_j - \bfx_{N+1}}_2^2 \ge 
\norm{\bfx_{i^*} - \bfx_{N+1}}_2^2 + \delta, \\
\text{for all } j \neq i^* \text{ with } c_j \neq c_{i^*} \,\big\}.
\end{multline}
which captures the set of instances with large margin.

\begin{theorem}[Distribution shift]\label{thm:shift}
Let \(P^{\mathrm{test}}\) be any distribution over instances
\((\{(\bfx_i, c_i)\}_{i \in [N]}, \bfx_{N+1})\) supported on \(\calS^{d-1}\)
with labels in \([K]\). After \(T\) steps of gradient descent,
\begin{multline*}
\bbE_{P^{\mathrm{test}}}\big[\norm{\bell_{\bfW^{T}}(\bfx_{N+1})
- \bfu_{c_{i^*}}}_2^2\big] \\
\le O\!\left(\tfrac{K-1}{K}\,\inf_{\delta > 0}\big\{
N^2 T^{-\poly(N,d)\,\delta}
+ P^{\mathrm{test}}(A_\delta^c) \big\}\right).
\end{multline*}
\end{theorem}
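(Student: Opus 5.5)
We prove \cref{thm:shift} pointwise on each instance and then integrate over \(P^{\mathrm{test}}\), splitting on whether \(A_\delta\) holds. By \cref{lem:2d}, \(\bfW^T = \diag\{\xi_1^T\bfI_d,\vzero_{K\times K},-\xi_2^T\}\), so the attention scores are \(\bfh_j^\top\bfW^T\bfh_{N+1} = \xi_1^T\ip{\bfx_j}{\bfx_{N+1}}\) for \(j\in[N]\) and \(\xi_1^T - \xi_2^T\) for the query itself. Since the inputs lie on the sphere, \(\ip{\bfx_j}{\bfx_{N+1}} = 1 - \tfrac12\norm{\bfx_j-\bfx_{N+1}}_2^2\). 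Hence the attention weights depend only on squared distances, and this does not depend on the test law.

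\textbf{Error decomposition.} Write
\[
\bell_{\bfW^T} - \bfu_{c_{i^*}} = \sum_{j\in[N]} q_j(\bfu_{c_j} - \bfu_{c_{i^*}}) - q_{N+1}\bfu_{c_{i^*}} .
\]
Terms with \(c_j = c_{i^*}\) vanish. The others have norm at most \(\sqrt2\) by the equidistance property, and \(\norm{\bfu_{c_{i^*}}}_2^2 = \tfrac{K-1}{K}\). Using \(\norm{\bfu_c-\bfu_{c'}}_2^2 = 2 = \tfrac{2K}{K-1}\cdot\tfrac{K-1}{K}\), we pull out the factor \(\tfrac{K-1}{K}\) up to constants. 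Cauchy--Schwarz then gives
\[
\norm{\bell - \bfu_{c_{i^*}}}_2^2 \le O\!\left(\tfrac{K-1}{K}\right)\Big(\sum_{j:\,c_j\ne c_{i^*}} q_j + q_{N+1}\Big)^2 .
\]
Both quantities in the bracket are at most \(1\), so on \(A_\delta^c\) the bound is simply \(O(\tfrac{K-1}{K})\). This produces the term \(P^{\mathrm{test}}(A_\delta^c)\).

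\textbf{On \(A_\delta\).} Every \(j\) with \(c_j\ne c_{i^*}\) satisfies
\[
q_j/q_{i^*} = \exp\!\big(-\tfrac{\xi_1^T}{2}(\norm{\bfx_j-\bfx_{N+1}}^2 - \norm{\bfx_{i^*}-\bfx_{N+1}}^2)\big) \le e^{-\xi_1^T\delta/2}.
\]
Also \(q_{N+1}/q_{i^*}\le e^{\xi_1^T(1-\ip{\bfx_{i^*}}{\bfx_{N+1}}) - \xi_2^T} \le e^{2\xi_1^T-\xi_2^T}\). Summing over at most \(N\) indices gives a bracket of at most \(Ne^{-\xi_1^T\delta/2} + e^{-(\xi_2^T - 2\xi_1^T)}\), and squaring gives \(O(N^2 e^{-\xi_1^T\delta})\) plus the query term.

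\textbf{Trajectory estimates from \cref{thm:convergence}.} We now need lower bounds on the trained parameters from the proof of \cref{thm:convergence}, which is the binary analysis of \cite{li2024onelayer} rescaled by \cref{lem:scaling}.
\begin{itemize}
\item \emph{Growth of \(\xi_1^T\).} We need \(\xi_1^T \ge \poly(N,d)^{-1}\log T\), up to constants, so that \(e^{-\xi_1^T\delta} \le T^{-\poly(N,d)\delta}\). This follows because the loss decays like \(1/\log t\) and the increments of \(\xi_1\) are comparable to the gradient magnitude. The factor \(\tfrac{K-1}{K}\) only rescales the step by a constant, which \(\poly(N,d)\) absorbs.
\item \emph{Separation of \(\xi_2\) from \(\xi_1\).} We need \(\xi_2^T - 2\xi_1^T\) to stay large, or at least to grow so that \(e^{-(\xi_2^T-2\xi_1^T)}\) is dominated by the first term. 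The ratio invariant controlling the binary trajectory is unchanged by the scaling, which is exactly the remark after \cref{lem:scaling}. So we inherit from \cite{li2024onelayer} that \(\xi_2^t\) tracks \(\xi_1^t\) with the required gap, using the \(\sigma\) lower bound at initialization.
\end{itemize}

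Taking expectations over \(A_\delta\) and \(A_\delta^c\) and then the infimum over \(\delta\) gives the stated bound.

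\textbf{Main obstacle.} The hard step is the trajectory estimates: extracting explicit \(\log T\) growth of \(\xi_1^T\) and the suppression of \(q_{N+1}\) from the binary analysis. My plan is to invoke the corresponding distribution-shift lemma of \cite{li2024onelayer} directly on the rescaled dynamics, checking that a constant rescaling of the gradient preserves its monotonicity and ratio arguments.
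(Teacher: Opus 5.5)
Your proposal is correct and follows essentially the same route as the paper: the same error decomposition \eqref{eq:errsplit}, the same margin bound $q_j/q_{i^*}\le e^{-\xi_1^T\delta/2}$ on $A_\delta$, the same self-attention bound $q_{N+1}/q_{i^*}\le e^{2\xi_1^T-\xi_2^T}$, and the same split of the expectation over $A_\delta$ and $A_\delta^c$. The only bookkeeping difference is that you pull out $\tfrac{K-1}{K}$ up front via $2=\tfrac{2K}{K-1}\cdot\tfrac{K-1}{K}$, whereas the paper absorbs it at the end using $\tfrac{K-1}{K}\ge\tfrac12$. One soft spot: your justification that $\xi_1^T$ grows \emph{because} the loss decays like $1/\log t$ is circular as stated, since the paper derives the loss bound from that growth. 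Instead, cite \cref{lem:growth} for $\xi_1^T,\xi_2^T=\Omega(\poly(N,d)\log T)$, and \cref{lem:ratio} for $\xi_1^T\le\tfrac{7}{15}\xi_2^T$, which gives $q_{N+1}\le e^{-\xi_2^T/15}$; this is exactly what the paper does.
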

\begin{remark}[Training vs.\ test distributions]\label{rem:shift}
Here \(P^{\mathrm{test}}\) is a distribution over the entire instance
\((\{(\bfx_i,c_i)\}_{i\in[N]}, \bfx_{N+1})\). Beyond the stated ``supported on \(\calS^{d-1}\)''
conditions, no independence is assumed between inputs and
labels, or across tokens, and no relation to \(P^{\mathrm{train}}\) is
assumed.
This mirrors \cite[Assumption~3]{li2024onelayer}. The label
boundedness required there holds automatically here since
\(\norm{\bfu_c}_2 \le 1\).

A guarantee under an arbitrary
\(P^{\mathrm{test}}\) is possible because the underlying estimate is
pointwise rather than distributional: training on \(P^{\mathrm{train}}\)
produces the fixed parameters
\(\bfW^{T} = \diag\{\xi_1^{T}\bfI_d,\, \vzero_{K\times K},\, -\xi_2^{T}\}\),
and on the event \(A_\delta\) the deviation
\(\norm{\bell_{\bfW^{T}}(\bfx_{N+1}) - \bfu_{c_{i^*}}}_2\) admits a
deterministic bound, decaying in \(\xi_1^{T}\delta\) and \(\xi_2^{T}\),
which holds for every prompt--query configuration on the sphere (the
pointwise bound is derived in the appendix). The test distribution enters only through
\(P^{\mathrm{test}}(A_\delta^c)\), the probability that the query falls
within \(\delta\) of a 1-NN decision boundary.
\end{remark}
\begin{corollary}[Argmax classification]\label{cor:argmax}
Under the setting of \cref{thm:shift},
\begin{multline*}
P^{\mathrm{test}}\big(\hat{c}_{\bfW^{T}}(\bfx_{N+1}) \neq c_{i^*}\big) \\
\le O\!\left(\inf_{\delta > 0}\big\{
N^2 T^{-\poly(N,d)\,\delta} + P^{\mathrm{test}}(A_\delta^c) \big\}\right).
\end{multline*}
Moreover, if \(P^{\mathrm{test}}(A_{\delta^*}) = 1\) for some \(\delta^* > 0\),
then \(\hat{c}_{\bfW^{T}}(\bfx_{N+1}) = c_{i^*}\) almost surely once
\(\log T \ge O\!\big(\log(KN)/(\poly(N,d)\,\delta^*)\big)\).
\end{corollary}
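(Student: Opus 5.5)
The plan is to reduce argmax misclassification to a large deviation of the logit vector from the target simplex code, and then invoke \cref{thm:shift} through a Markov inequality. The geometric fact needed is this: if \(\norm{\bell - \bfu_{c}}_2 < \tfrac{1}{2}\), then \(\argmax_k [\bell]_k = c\).

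To see this, note that \([\bfu_c]_c = 1 - \tfrac1K\) and \([\bfu_c]_k = -\tfrac1K\) for \(k \neq c\). Hence for every \(k \neq c\),
\[
[\bell]_c - [\bell]_k \ge 1 - \big|[\bell - \bfu_c]_c\big| - \big|[\bell - \bfu_c]_k\big| \ge 1 - \sqrt{2}\,\norm{\bell - \bfu_c}_2 ,
\]
which is positive whenever \(\norm{\bell-\bfu_c}_2 < 1/\sqrt2\). So the error event \(\{\hat c_{\bfW^T}(\bfx_{N+1}) \neq c_{i^*}\}\) is contained in \(\{\norm{\bell_{\bfW^T}(\bfx_{N+1}) - \bfu_{c_{i^*}}}_2^2 \ge \tfrac12\}\). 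Ties in the argmax are also covered, because the inequality above is strict.

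Markov's inequality then gives \(P^{\mathrm{test}}(\hat c \neq c_{i^*}) \le 2\,\bbE_{P^{\mathrm{test}}}\norm{\bell_{\bfW^T} - \bfu_{c_{i^*}}}_2^2\). Applying \cref{thm:shift} and bounding \(\tfrac{K-1}{K} \le 1\) yields the first claim.

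For the almost-sure statement, Markov is not enough, and I would instead use the pointwise bound behind \cref{thm:shift} described in \cref{rem:shift}. On \(A_{\delta^*}\) the deviation is deterministic. Writing \(\bell - \bfu_{c_{i^*}} = \sum_j q_j(\bfu_{c_j} - \bfu_{c_{i^*}}) - q_{N+1}\bfu_{c_{i^*}}\), only the indices with \(c_j \neq c_{i^*}\) and the query's own weight contribute. Each such \(q_j\) is at most \(\exp(-\xi_1^T\delta^*)\) times the weight on \(i^*\), because \(\bfh_j^\top\bfW\bfh_{N+1} = \xi_1\ip{\bfx_j}{\bfx_{N+1}}\) and the squared-distance gap \(\delta^*\) equals twice the inner-product gap. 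The query weight is at most \(e^{-\xi_2^T - \xi_1^T\ip{\bfx_{i^*}}{\bfx_{N+1}}}\)-small relative to \(i^*\).

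This gives \(\norm{\bell-\bfu_{c_{i^*}}}_2 \le \sqrt2\, N e^{-\xi_1^T\delta^*/2} + e^{-\xi_2^T + \xi_1^T}\), roughly. The deviation is therefore below \(1/\sqrt2\) on all of \(A_{\delta^*}\), hence almost surely, once \(\xi_1^T\delta^* \gtrsim \log N\) and \(\xi_2^T - \xi_1^T \gtrsim 1\). The \(\log K\) in the stated threshold gives slack, for instance if one works with an \(\ell_\infty\) version of the logit bound.

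The main obstacle is converting these conditions on \((\xi_1^T,\xi_2^T)\) into a condition on \(T\). For this I would use the growth rates from the convergence analysis (\cref{thm:convergence} together with \cref{lem:scaling}, inherited from \cite{li2024onelayer}). These give \(\xi_1^T \ge \poly(N,d)\log T\) up to constants, and keep \(\xi_2^T - \xi_1^T\) bounded below by a positive quantity via the initialization \(\sigma > 2\log(Nd)\). This is the same mechanism that produces the factor \(T^{-\poly(N,d)\delta}\) in \cref{thm:shift}. Substituting, the threshold becomes \(\log T \ge O(\log(KN)/(\poly(N,d)\delta^*))\), as claimed. The factor \(\tfrac{K-1}{K}\) from \cref{lem:scaling} only rescales the constants, so it is absorbed into the \(O(\cdot)\).
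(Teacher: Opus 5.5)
Your first claim follows the paper's route: the contrapositive of an argmax-margin fact, then Markov, then \cref{thm:shift}. Your margin lemma is in fact slightly sharper. It has threshold $1/\sqrt2$, via $|[\bell-\bfu_c]_c|+|[\bell-\bfu_c]_k|\le\sqrt2\norm{\bell-\bfu_c}_2$, instead of the paper's $1/2$ via $\norm{\cdot}_\infty\le\norm{\cdot}_2$, which only changes the Markov constant from $4$ to $2$.

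The gap is in the almost-sure part, specifically your bound on the query's self-attention weight. The query's own score is $\bfh_{N+1}^\top\bfW^{T}\bfh_{N+1}=\xi_1^{T}\norm{\bfx_{N+1}}_2^2-\xi_2^{T}=\xi_1^{T}-\xi_2^{T}$, so the correct estimate is
\[
q_{N+1}\le\frac{q_{N+1}}{q_{i^*}}=\exp\big(\xi_1^{T}(1-\ip{\bfx_{i^*}}{\bfx_{N+1}})-\xi_2^{T}\big)\le\exp\big(2\xi_1^{T}-\xi_2^{T}\big).
\]
For an arbitrary $P^{\mathrm{test}}$ you cannot do better than $\ip{\bfx_{i^*}}{\bfx_{N+1}}\ge-1$ in the last step, because nothing prevents the nearest neighbor from having negative inner product with the query. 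Your $e^{\xi_1^{T}-\xi_2^{T}}$ drops the $\xi_1^{T}$ in the self-score. As a result, the condition you aim for, $\xi_2^{T}-\xi_1^{T}\gtrsim1$, is not sufficient. When $\ip{\bfx_{i^*}}{\bfx_{N+1}}$ is near $-1$ and $\xi_2^{T}-\xi_1^{T}$ is merely bounded, $q_{N+1}$ can be close to $1$. Then $\norm{\bell-\bfu_{c_{i^*}}}_2$ is close to $\norm{\bfu_{c_{i^*}}}_2\ge1/\sqrt2$, and your $\ell_2$ route certifies nothing. Separately, the initialization $\sigma>2\log(Nd)$ only controls $t=0$; by itself it does not keep any gap between $\xi_2^t$ and $\xi_1^t$ along the trajectory.

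The missing ingredient is the ratio bound of \cref{lem:ratio}, namely $\xi_1^t\le\tfrac{7}{15}\xi_2^t$ for all $t$. It gives $2\xi_1^{T}-\xi_2^{T}\le-\tfrac{1}{15}\xi_2^{T}$, hence $q_{N+1}\le e^{-\xi_2^{T}/15}$, which vanishes since $\xi_2^{T}=\Omega(\poly(N,d)\log T)$. This is how the paper obtains its pointwise bound $\sqrt2\,N e^{-\xi_1^{T}\delta^*/2}+e^{-\xi_2^{T}/15}$ on $A_{\delta^*}$ before applying the margin lemma.

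Alternatively, you can sidestep $q_{N+1}$ entirely. Write $\bell=(1-q_{N+1})\tilde{\bell}$ with $\tilde{\bell}=\sum_{j\le N}\tilde q_j\bfu_{c_j}$, where $\tilde q_j=q_j/(1-q_{N+1})$ is the softmax over the context tokens only. Argmax is invariant under positive scaling, so you can apply your margin lemma to $\tilde{\bell}$. That requires only $\sqrt2\,N e^{-\xi_1^{T}\delta^*/2}<1/\sqrt2$, which follows from the growth of $\xi_1^{T}$ alone. One minor slip: the per-token factor is $\exp(-\xi_1^{T}\delta^*/2)$, not $\exp(-\xi_1^{T}\delta^*)$, though your final display has it right.
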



\begin{figure}[t]
  \centering
  \includegraphics[width=\textwidth]{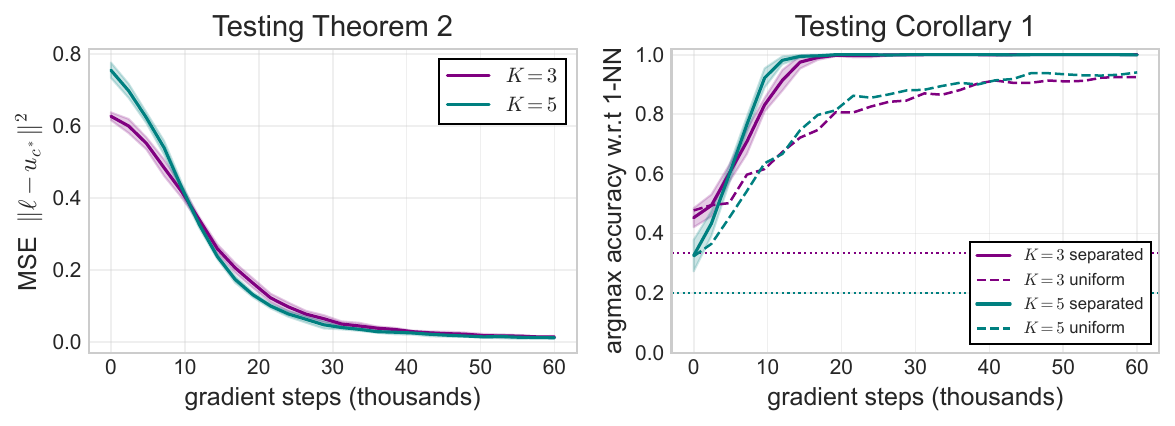}
  \caption{Distribution shift (\cref{thm:shift}) and argmax classification
  (\cref{cor:argmax}). On the separated test distribution the logit MSE
  $\norm{\bell - \bfu_{c_{i^*}}}_2^2 \to 0$ (left) and the argmax matches the
  1-NN label (right). The uniform-data accuracy is shown for contrast.}
  \label{fig:distshift}
\end{figure}

\begin{figure}[t]
  \centering
  \includegraphics[width=\textwidth]{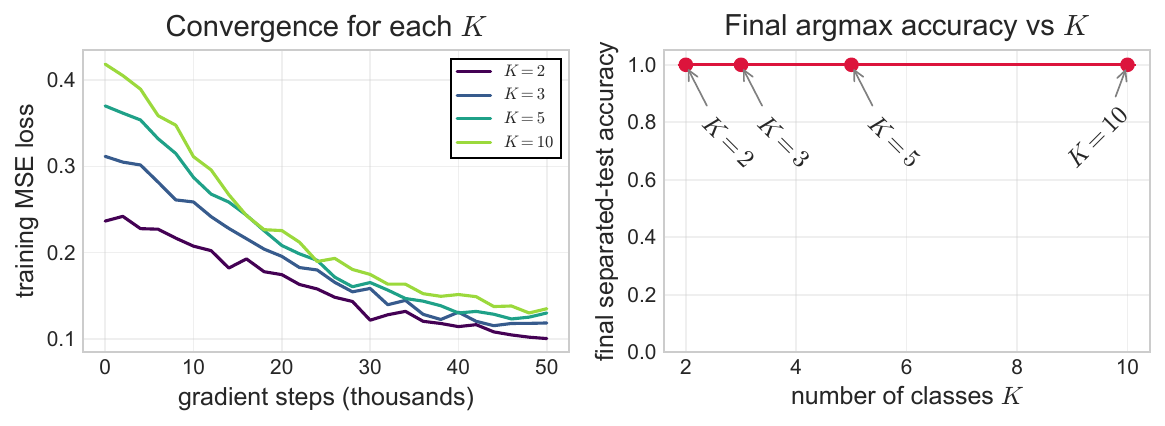}
  \caption{Convergence holds for every $K \in \{2,3,5,10\}$ (left), with final
  separated-test accuracy $1$ across $K$ (right).}
  \label{fig:kdep}
\end{figure}

\begin{figure}[t]
  \centering
  \includegraphics[width=\textwidth]{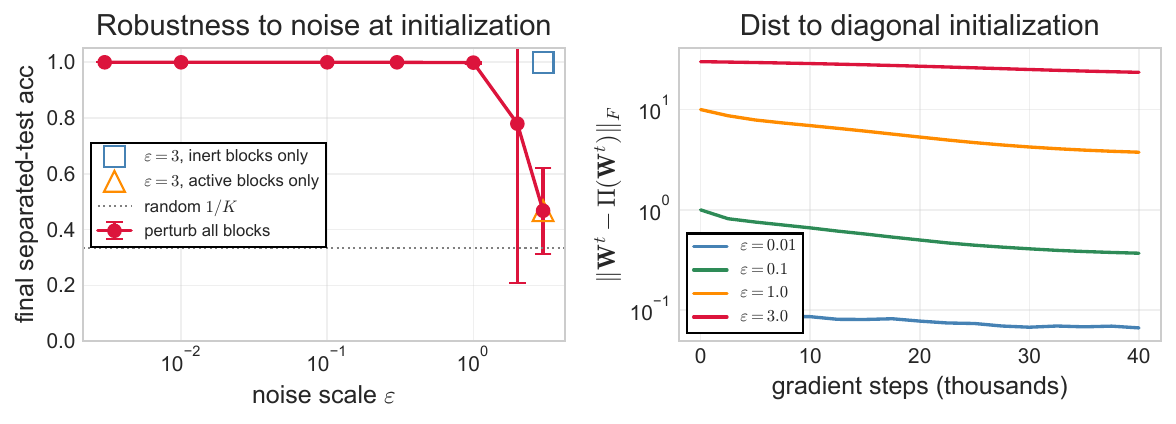}
  \caption{Perturbed initialization. Left: final separated-test argmax accuracy
  against the noise scale $\varepsilon$, with the two block-restricted
  ablations at $\varepsilon = 3$. Right: distance from the two-parameter family
  of \cref{lem:2d} over training, for the blocks that enter the attention
  scores.}
  \label{fig:init}
\end{figure}

\begin{figure}[t]
  \centering
  \includegraphics[width=\textwidth]{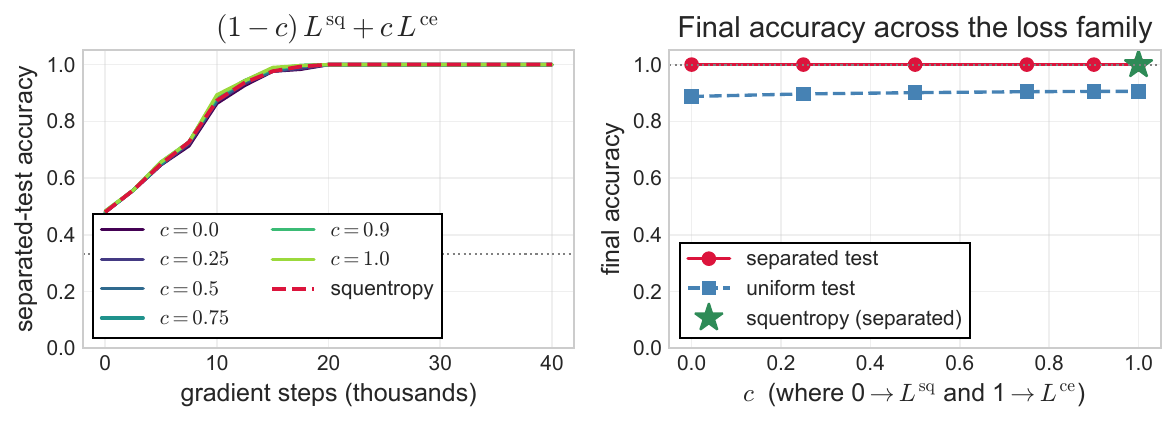}
  \caption{Loss function. Left: separated-test argmax accuracy over training for
  $(1-c)L^{\mathrm{sq}} + cL^{\mathrm{ce}}$ and for squentropy. Right: final
  accuracy on the separated and uniform test distributions across the family.}
  \label{fig:loss}
\end{figure}

\section{Experiments}
\label{sec:experiments}

We validate the theory with six experiments. Complete configurations and code to reproduce all experimental results
are available at \href{\coderepo}{\texttt{\coderepo}}.
Compute environment is listed in \Cref{app:hardware}.

\paragraph{Protocol.}
All experiments use the one-layer attention model of \eqref{eq:forward} with
the score $\bfH^\top \bfW \bfH$, the centered one-hot vector encoding
\eqref{eq:simplex}, and the loss \eqref{eq:loss}. Training data follows
\cref{ass:train} (inputs i.i.d.\ uniform on $\calS^{d-1}$, labels uniform on
$[K]$), and $\bfW$ is initialized as in \cref{ass:init}. We run stochastic
gradient descent on minibatch estimates of \eqref{eq:loss} with analytical
gradients, which we verify against finite differences in double precision to relative
error $\approx 10^{-10}$. Training runs in PyTorch \cite{paszke2019pytorch}.

The theory analyzes
gradient descent on the population objective \eqref{eq:loss}. The minibatch
updates are a stochastic approximation of the population gradient.
\cite{li2024onelayer} likewise validate their theory with stochastic gradient
descent, in a less restrictive setting still with random initialization
 whereas we retain the initialization of \cref{ass:init}. Unless stated
otherwise, $d = 8$, $N = 16$, $\sigma = 3.0$, learning rate $\eta = 0.05$, and
batch size $128$.

Additionally,
 we evaluate on a
well-\emph{separated} test distribution, extending the construction of
\cite[Appendix~B]{li2024onelayer} to $K$ classes,  which places
the query strictly closer to examples of a single class than to all others, so
that $P^{\mathrm{test}}(A_{\delta^*}) = 1$ for a fixed margin $\delta^* > 0$,
matching the setting of \cref{thm:shift} and \cref{cor:argmax}.

\paragraph{Scaling identity (\cref{fig:scaling}).}
We test \cref{lem:scaling} directly. The identity
is an exact identity between two population expectations that admit no closed
form, so we verify it by estimating both sides directly from their definitions.
For fixed $(\xi_1,\xi_2)$, we sample prompts and queries from \cref{ass:train},
estimate $L_{\mathrm{MC}}$ and $L_{\mathrm{bin}}$ on this sample, and plot their
ratio.

At $(\xi_1,\xi_2) = (2.0, 5.0)$ the measured ratio traces $(K-1)/K$ over
$K \in \{2,3,4,5,7,10\}$ ($80000$ samples per point, left), and it is constant
over the grid $(\xi_1,\xi_2) \in \{(1,2),\,(2,5),\,(3,6),\,(5,8)\}$ for
$K \in \{2,3,5,10\}$ ($40000$ samples per point, right), with deviations within
error tolerance
confirming
\cref{lem:scaling}.

\paragraph{Convergence (\cref{fig:convergence}).}
We train with $K = 3$ and context lengths $N \in \{16, 32, 64\}$ for $10^5$
steps and $8$ random seeds. The training loss decreases towards zero at the
slow logarithmic rate of \cref{thm:convergence}, more slowly for larger $N$,
consistent with the $\poly(N,d)$ dependence of the bound (left). Argmax accuracy
on the separated test distribution rises from the random-guess baseline
$1/K \approx 0.33$ to $1$ (right). Bands show $\pm 2$ standard deviations over
the seeds.

\paragraph{Distribution shift and the argmax head (\cref{fig:distshift}).}
For $K \in \{3, 5\}$ we train for $6 \times 10^4$ steps and $6$ random seeds,
and evaluate on the separated distribution, which differs from the training
distribution but satisfies the margin condition. The logit error
$\norm{\bell - \bfu_{c_{i^*}}}_2^2$ decays toward zero (left) and argmax
accuracy reaches $1$ (right), as \cref{thm:shift} and \cref{cor:argmax}
predict.

On uniform test data, on the other hand, accuracy plateaus below $1$:
a constant fraction of queries falls near a 1-NN decision boundary, so
$P^{\mathrm{test}}(A_\delta^c)$ does not vanish for any fixed $\delta$ and the
second term of the bound in \cref{thm:shift} remains. We note that the plateau is as predicted by the
theory.

\paragraph{Dependence on the number of classes (\cref{fig:kdep}).}
We vary $K \in \{2,3,5,10\}$, training for $5 \times 10^4$ steps and $4$
random seeds. Convergence holds for every $K$: the prefactor $\tfrac{K-1}{K}$
shifts the loss curves but leaves the logarithmic rate unchanged (left), and the
final separated-test accuracy is $1$ for every $K$ (right). The guarantees of
\cref{thm:convergence} and \cref{cor:argmax} are therefore uniform in the number
of classes.

\paragraph{Dependence on the diagonal initialization (\cref{fig:init}).}
\cref{ass:init} prescribes an exactly diagonal $\bfW^0$, which has precedent in
the mimetic initialization of \cite{trockman2023mimetic}. We test how far the
1-NN phenomenon depends on it by adding i.i.d.\ $\mathcal{N}(0,\varepsilon^2)$
noise to every entry of $\bfW^0$ and sweeping $\varepsilon$, training for
$4 \times 10^4$ steps and $3$ random seeds at each value.

Alongside accuracy we
track the distance from the two-parameter family of \cref{lem:2d}, that is
$\norm{\bfW^t - \Pi(\bfW^t)}_F$ where $\Pi$ projects onto
$\diag\{\xi_1\bfI_d, \vzero_{K\times K}, -\xi_2\}$ (\cref{fig:init} right).

The separated-test argmax accuracy is $1.000$ for every $\varepsilon \le 1.0$,
degrades at $\varepsilon = 2.0$ ($0.78 \pm 0.28$) and fails at
$\varepsilon = 3.0$ ($0.44 \pm 0.07$), so the guarantee survives perturbations
up to roughly the scale $\sigma$ of the mask entry itself.

We observe first that, wherever the result survives, the distance
to the family \emph{contracts} over training, to about a third of its initial
value ($1.056 \to 0.343$ at $\varepsilon = 0.1$ and $10.558 \to 3.449$ at
$\varepsilon = 1.0$); where it fails, the contraction weakens to $0.79$
($31.673 \to 25.145$ at $\varepsilon = 3.0$).
\cref{lem:2d} states that the
family is invariant under gradient descent, and this indicates that it is also
locally attracting: the trajectory returns to the diagonal structure rather
than merely remaining on it. Moreover, failing to return to the diagonal structure seems to imply failing to
learn the 1-NN rule.

Second, we observe that the component of the perturbation lying in
$\bfW_{12}, \bfW_{22}, \bfW_{32}$ is unchanged to three decimal places
throughout training ($5.400 \to 5.400$ at $\varepsilon = 1.0$), as the
identically zero gradients established in \cref{lem:2d} require.
Consistently, noise of magnitude $\varepsilon = 3.0$ confined to those blocks
leaves the accuracy at $1.000$, while the same magnitude confined to the
remaining blocks reproduces the failure ($0.44$).

\paragraph{Dependence on the loss function (\cref{fig:loss}).}
The analysis thus far (and also in \cite{li2024onelayer}) is specific to the square loss: \cref{lem:scaling} is an identity
between coordinatewise expansions of \eqref{eq:loss} and has no cross-entropy
analogue.
We test whether the one-nearest-neighbor phenomenon
extends to ``convex-combination'' losses of the form
\[(1-c)\,L^{\mathrm{sq}} + c\,L^{\mathrm{ce}},\] interpolating from the square
loss at $c = 0$ to cross entropy at $c = 1$.
We also test against 
the ``squentropy'' loss
of \cite{hui2023cut},
\begin{equation*}
L^{\mathrm{sqen}} = L^{\mathrm{ce}}
+ \frac{1}{K-1}\sum_{k \neq c_{i^*}} [\bell]_k^2,
\end{equation*}
Unlike the convex-combination losses above, the squentropy loss 
has no tunable parameter.
Both
families are trained for $4 \times 10^4$ steps and $3$ random seeds, using the
same hyperparameters as in the square-loss experiments.

The separated-test argmax accuracy is $1.000$ for every $c > 0$ and for
squentropy, and $0.999 \pm 0.002$ at $c = 0$, so the recovery of the 1-NN label
does not depend on the choice of loss.
On uniform test data the accuracy
increases mildly and monotonically with $c$, from $0.908$ at $c = 0$ to $0.924$
at $c = 1$, with squentropy at $0.916$.

Recall that the square-loss gradient \(\approx 0\) when $\bell \approx \bfu_{c_{i^*}}$, whereas the
cross-entropy gradient never vanishes.
Training under cross entropy
therefore continues to boost attention on the nearest neighbor, reducing the
near-boundary errors seen in the uniform-data plateau.

Taken together, our result shows that the one-nearest-neighbor phenomenon is robust to changing the loss function. While our theory covers the square loss only, an interesting line of future work will analyze the cross entropy.


\section{Discussion, Limitations, and Future Work}\label{sec:discussion}
We have rigorously shown the equivalence between a one-layer softmax attention model and the one-nearest-neighbor rule, extending the binary setting result by \citet{li2024onelayer}.
We show that the convergence rate, the conditions on $\sigma$ and $N$, and the distribution-shift guarantee all carry over without $K$-dependence.

We discuss several limitations in and future directions of this work. The analysis exploits the one-layer architecture and the special diagonal initialization of \cref{ass:init}. While these restrictions have precedent in the literature
(e.g., in \citep{trockman2023mimetic})
an interesting avenue of future research is to remove these restrictions.

Our experiments show the one-nearest-neighbor phenomenon persists after switching the loss function from square loss to squentropy. It will be interesting to
develop a rigorous theory to explain this phenomenon.
The scaling identity is specific to the square loss and has no cross-entropy analogue, though the empirical behavior does not depend on that choice.

Finally, the guarantee concerns recovery of the 1-NN rule for a single head.
Extending the argument to multiple heads is a natural next step.


\bibliographystyle{plainnat}
\bibliography{references_YW}

\appendix
\section{Proof of the Convergence Theorem}\label{app:convergence}

We prove \cref{thm:convergence} as follows.

\begin{lemma}[Properties of the simplex code]\label{lem:simplex-props}
Under \cref{ass:train}, the simplex codes $\bfu_{c_i}$ defined in \eqref{eq:simplex} satisfy
\begin{gather*}
\text{(i)}\ \ \bbE[\bfu_{c_i} \mid \bfx_{1:N+1}] = \vzero_K,\\
\text{(ii)}\ \ \bbE[\bfu_{c_i}\bfu_{c_j}^\top \mid \bfx_{1:N+1}]
= \vzero_{K\times K}\ \text{ for } i \neq j,\\
\text{(iii)}\ \ \norm{\bfu_{c_i}}_2^2 = \tfrac{K-1}{K}.
\end{gather*}
\end{lemma}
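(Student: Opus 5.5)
The plan is to read all three properties off the independence structure in \cref{ass:train} and the explicit form \eqref{eq:simplex}; no analytic input is needed. Under \cref{ass:train} the labels $c_1,\dots,c_N$ are i.i.d.\ uniform on $[K]$ and jointly independent of $\bfx_{1:N+1}$. Hence conditioning on $\bfx_{1:N+1}$ leaves the joint law of the labels unchanged. Every conditional expectation therefore reduces to an unconditional expectation over uniform labels.

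For (i), compute
\[
\bbE[\bfu_{c_i}\mid \bfx_{1:N+1}] = \tfrac{1}{K}\sum_{c\in[K]}\bfu_c = \tfrac1K\sum_{c}\bfe_c - \tfrac{1}{K}\vones_K = \vzero_K,
\]
which is the zero-mean property of the simplex vertices. For (ii), take $i\neq j$. The labels $c_i$ and $c_j$ are independent of each other and of the inputs, so the conditional expectation factors:
\[
\bbE[\bfu_{c_i}\bfu_{c_j}^\top\mid \bfx_{1:N+1}] = \bbE[\bfu_{c_i}]\,\bbE[\bfu_{c_j}]^\top = \vzero_K\vzero_K^\top = \vzero_{K\times K},
\]
using (i).

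Property (iii) is deterministic. It follows by expanding the squared norm:
\[
\norm{\bfe_c - \tfrac1K\vones_K}_2^2 = 1 - \tfrac{2}{K} + \tfrac{K}{K^2} = \tfrac{K-1}{K}.
\]

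The only step that needs care is (ii). The factorization requires mutual independence of $(c_i, c_j, \bfx_{1:N+1})$, not merely pairwise independence of each label from the inputs. I will make this explicit by citing the clause ``independently across tokens and of the inputs'' in \cref{ass:train}. With that, the proof is a few lines.
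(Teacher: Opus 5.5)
Your proposal is correct and follows the paper's proof essentially line for line: (i) comes from averaging the simplex vertices under uniform labels that are independent of the inputs, (ii) from factoring the conditional second moment and applying (i), and (iii) from directly expanding the squared norm. Your explicit note that (ii) needs $(c_i,c_j)$ to be jointly independent of the inputs, and $c_i$ independent of $c_j$, is a slightly more careful phrasing of the paper's ``both are independent of $\mathbf{x}_{1:N+1}$'' step.
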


\begin{proof}
For (i), the labels are independent of the inputs and the $K$ classes are
equally likely, so
$\bbE[\bfu_{c_i} \mid \bfx_{1:N+1}] = \bbE[\bfu_{c_i}]
= \tfrac{1}{K}\sum_{c=1}^{K}(\bfe_c - \tfrac1K\vones_K)
= \tfrac1K\vones_K - \tfrac1K\vones_K = \vzero_K$. For (ii), $c_i$ and $c_j$ are
independent for $i \neq j$ and both are independent of $\bfx_{1:N+1}$, so
$\bbE[\bfu_{c_i}\bfu_{c_j}^\top \mid \bfx_{1:N+1}]
= \bbE[\bfu_{c_i}]\,\bbE[\bfu_{c_j}]^\top = \vzero_{K\times K}$ by (i). For (iii),
for any class $c$,
\begin{align*}
\norm{\bfu_c}_2^2 &= \norm{\bfe_c - \tfrac1K\vones_K}_2^2\\
&= \norm{\bfe_c}_2^2 - \tfrac{2}{K}\ip{\bfe_c}{\vones_K}
+ \tfrac{1}{K^2}\norm{\vones_K}_2^2\\
&= 1 - \tfrac2K + \tfrac1K = \tfrac{K-1}{K},
\end{align*}
which is independent of $c$.
\end{proof}

\subsection{Closed-Form Gradient}

\begin{lemma}[Closed-form gradient]\label{lem:closed-form-grad}
Under \cref{ass:train,ass:init}, for all $t \ge 0$,
\begin{equation}\label{eq:grad-W11}
\begin{split}
\nabla_{\bfW_{11}} L(\bfW^t) = \bbE\!\bigg[
&\sum_{i=1}^{N} g_i^t\!\big(\ip{\bfx_i}{\bfx_{N+1}}\big)\,
\bfx_i \bfx_{N+1}^\top \\
&+ g_{i^*}^t\!\big(\ip{\bfx_{i^*}}{\bfx_{N+1}}\big)\,
\bfx_{i^*} \bfx_{N+1}^\top\bigg],
\end{split}
\end{equation}
for scalar-valued functions $\{g_i^t\}_{i \in [N]} \cup \{g_{i^*}^t\}$ of one real
variable, and $\nabla_{\bfW_{ij}} L(\bfW^t) = \vzero$ for every block $(i,j)$
except $(1,1)$ and $(3,3)$.
\end{lemma}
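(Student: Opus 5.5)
We compute the gradient of $L$ at a point $\bfW^t = \diag\{\xi_1^t\bfI_d, \vzero_{K\times K}, -\xi_2^t\}$ of the two-parameter family directly, and then use the label-averaging properties of \cref{lem:simplex-props} together with the sign symmetry (property 3 of \cref{ass:train}) to make all blocks other than $(1,1)$ and $(3,3)$ vanish. The lemma is stated for every $t$, so we will argue by induction on $t$. Assume $\bfW^t$ lies in the family; this holds at $t=0$ by \cref{ass:init} with $\xi_1^0=0$, $\xi_2^0=\sigma$. Then the attention scores are $\bfh_j^\top\bfW^t\bfh_{N+1} = \xi_1^t\ip{\bfx_j}{\bfx_{N+1}}$ for $j\le N$ and $\xi_1^t - \xi_2^t$ for $j=N+1$. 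Consequently every $q_j$ is a function of the inner products $s_l := \ip{\bfx_l}{\bfx_{N+1}}$ only and does not depend on the labels. The lemma then also supplies the inductive step needed for \cref{lem:2d}.

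\emph{Chain rule.} Since $\partial(\bfh_j^\top\bfW\bfh_{N+1})/\partial\bfW = \bfh_j\bfh_{N+1}^\top$, we have
\[
\nabla_{\bfW}L = \bbE\Big[\sum_{j=1}^{N+1} \big(\bell_{\bfW}-\bfu_{c_{i^*}}\big)^\top \frac{\partial \bell_{\bfW}}{\partial z_j}\, \bfh_j\bfh_{N+1}^\top\Big],
\qquad
\frac{\partial\bell_{\bfW}}{\partial z_j} = q_j\big(\bfu_{c_j}-\bell_{\bfW}\big),
\]
where $z_j$ denotes the $j$-th score and we set $\bfu_{c_{N+1}} := \vzero_K$. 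The factor $\bfh_{N+1}^\top = [\bfx_{N+1}^\top;\vzero_K^\top;1]$ has a zero middle block. Hence every block with second index $2$ (that is, $\bfW_{12},\bfW_{22},\bfW_{32}$) has zero gradient identically, for all $\bfW$ and not only on the family.

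\emph{Scalar weights.} Next we expand the scalar weight $w_j := q_j(\bell-\bfu_{c_{i^*}})^\top(\bfu_{c_j}-\bell)$, using $\bell=\sum_k q_k\bfu_{c_k}$, and take the conditional expectation given $\bfx_{1:N+1}$. By (i)--(ii) only the diagonal terms $\bbE[\bfu_{c_k}^\top\bfu_{c_k}]=\tfrac{K-1}{K}$ survive, together with terms in which an index equals $i^*$. The index $i^*$ is $\bfx$-measurable, so it can be treated as fixed under the conditioning. This turns each $\bbE[w_j\mid\bfx]$ into $\tfrac{K-1}{K}$ times an explicit polynomial in the $q$'s. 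The products $\bfu_{c_j}\cdot\bfu_{c_j}$ and $\bfu_{c_{i^*}}\cdot\bfu_{c_{i^*}}$ contribute the terms of the form $q_j(\cdot)$ and $\ind\{j=i^*\}(\cdot)$, which is the origin of the separate $g^t_{i^*}$ term in \eqref{eq:grad-W11}.

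\emph{Blocks $(2,1),(2,3)$ and the odd blocks.} In these blocks $\bfh_j$ contributes its label part $\bfu_{c_j}$, so the gradient carries an extra factor $\bfu_{c_j}$ multiplying $w_j$. The resulting expectations are third-order moments in the codes, such as $\bbE[\bfu_{c_j}\bfu_{c_k}^\top\bfu_{c_l}\mid\bfx]$. These vanish unless the indices pair up: two of the codes must coincide with the third being $\vzero$ in mean, or all three indices must be equal or equal to $i^*$. The main obstacle is the coinciding-index cases, which involve $\bbE[\bfu_c\norm{\bfu_c}^2] = \tfrac{K-1}{K}\bbE[\bfu_c]=\vzero$. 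I expect the constant-norm property to kill them all, but this has to be checked term by term. The blocks $(1,3)$ and $(3,1)$ pair $\bfx_j$ with the scalar $1$, or $0$ with $\bfx_{N+1}$. Their integrands are odd under $\bfx\mapsto-\bfx$ while the $q_j$ are even, since the inner products are invariant, so property 3 makes them vanish. The block $(3,3)$ uses $j=N+1$ only and survives.

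\emph{Form of the $(1,1)$ block and closing the induction.} For $(1,1)$, the conditional expectation is a sum of terms $\phi_i(s_1,\dots,s_N)\,\bfx_i\bfx_{N+1}^\top$. To reach the one-variable form $g_i^t(\ip{\bfx_i}{\bfx_{N+1}})$, we condition on $(\bfx_i,\bfx_{N+1})$ and integrate out the other i.i.d.\ uniform inputs. The resulting function depends on the pair only through $\ip{\bfx_i}{\bfx_{N+1}}$ by rotation invariance. For the $i^*$ term, the event $\{i^*=i\}$ likewise depends only on the inner products, so the same reduction applies. Finally, rotation invariance shows that $\bbE[g(\ip{\bfx_i}{\bfx_{N+1}})\bfx_i\bfx_{N+1}^\top]$ is a multiple of $\bfI_d$. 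The update therefore keeps $\bfW^{t+1}$ in the family, which closes the induction.
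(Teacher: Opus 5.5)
Your proposal is correct and follows essentially the paper's route. Both argue by induction on the diagonal family, use label-free attention weights together with the tower property and the simplex-code moments for the label blocks, use the $\bfx\mapsto-\bfx$ symmetry for the odd blocks, and use rotational invariance for $\bfW_{11}$. Your term-by-term check does close: every monomial has the form $\bfu_{c_a}(\bfu_{c_b}^\top\bfu_{c_e})$, and each index pattern leaves either an unpaired zero-mean code or the all-equal case, which vanishes by constant norm.

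On $\bfW_{13}$ you are more careful than the paper, which groups it with the label-averaged blocks. Its gradient $\bbE[\sum_j w_j\bfx_j]$ has conditional weights $\bbE[w_j\mid\bfx_{1:N+1}]$ that do not vanish (for $j=i^*$ the weight is $-q_{i^*}\,\bbE[\norm{\bell-\bfu_{c_{i^*}}}_2^2\mid\bfx_{1:N+1}]<0$), so only the parity argument you invoke makes this block vanish.
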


\begin{proof}
We induct on $t$. The form holds at $t=0$ by \cref{ass:init}. Assume it holds at step $t \ge 0$, so that
$\bfW^t = \diag\{\xi_1^t\bfI_d, \vzero_{K\times K}, -\xi_2^t\}$. Using
$\norm{\bfx_{N+1}}_2 = 1$, the score $\bfh_j^\top \bfW^t \bfh_{N+1}$ of
\eqref{eq:forward} reduces to $\xi_1^t \ip{\bfx_j}{\bfx_{N+1}}$ for
$j \in [N]$ and to $\xi_1^t - \xi_2^t$ for the query, so the attention
weights take the scalar form
\begin{equation}\label{eq:attn-scalar}
q_j = \frac{\exp\big(\xi_1^t \ip{\bfx_j}{\bfx_{N+1}}\big)}
{\sum_{l=1}^{N}\exp\big(\xi_1^t \ip{\bfx_l}{\bfx_{N+1}}\big)
 + \exp(\xi_1^t-\xi_2^t)}
\end{equation}
for $j \in [N]$, with $\exp(\xi_1^t-\xi_2^t)$ in the numerator for
$j = N+1$. Below we abbreviate $q_j = q_j(\bfx,\bfW^t)$. This is the
multiclass counterpart of \cite[Eq.~(C.9)]{li2024onelayer}.

The consequence used repeatedly below is that no label code appears in
\eqref{eq:attn-scalar}: each $q_j$ is a measurable function of
$\bfx_{1:N+1}$ alone. Hence for any function $\Phi$ of the labels and any
$j, j' \in [N]$,
\begin{equation}\label{eq:tower}
\bbE\big[\,q_j q_{j'}\, \Phi(c_{1:N})\,\big]
= \bbE\big[\,q_j q_{j'}\;
\bbE[\,\Phi(c_{1:N}) \mid \bfx_{1:N+1}\,]\,\big],
\end{equation}
by the tower property: the weights may be taken outside the inner
expectation, which is then evaluated by \cref{lem:simplex-props}. This step is
the only place where the independence of the labels from the inputs in
\cref{ass:train} is used.

Partition \(\bfW\) conformally with \(\bfh_j = [\bfx_j; \bfu_{c_j}; z_j]\),
\begin{equation*}
\bfW = \begin{bmatrix}
\bfW_{11} & \bfW_{12} & \bfW_{13}\\
\bfW_{21} & \bfW_{22} & \bfW_{23}\\
\bfW_{31} & \bfW_{32} & \bfW_{33}
\end{bmatrix},
\end{equation*}
with \(\bfW_{11} \in \bbR^{d\times d}\), \(\bfW_{22} \in \bbR^{K\times K}\),
\(\bfW_{33} \in \bbR\), and \(z_j = 0\) for \(j \in [N]\),
\(z_{N+1} = 1\). Since \(\bfh_{N+1} = [\bfx_{N+1}; \vzero_K; 1]\), the score
expands as
\begin{align}\label{eq:score-blocks}
\bfh_j^\top \bfW \bfh_{N+1}
&= \bfx_j^\top\big(\bfW_{11}\bfx_{N+1} + \bfW_{13}\big) \\
&+ \bfu_{c_j}^\top\big(\bfW_{21}\bfx_{N+1} + \bfW_{23}\big)\nonumber\\
&+ z_j\big(\bfW_{31}\bfx_{N+1} + \bfW_{33}\big).\nonumber
\end{align}

We treat the blocks in three groups.

\emph{(i) The blocks \(\bfW_{12}, \bfW_{22}, \bfW_{32}\).} These do not occur
in \eqref{eq:score-blocks}: they multiply the second block of
\(\bfh_{N+1}\), which is \(\vzero_K\). Every score, hence every attention
weight, hence \(L\), is therefore constant in these blocks, and
\(\nabla_{\bfW_{12}} L = \nabla_{\bfW_{22}} L = \nabla_{\bfW_{32}} L = \vzero\)
identically --- not merely at the initialization. This is the multiclass form
of the observation that the label-to-output column does not influence the
prediction \cite[Section~2.2]{li2024onelayer}.

\emph{(ii) The blocks \(\bfW_{21}, \bfW_{13}, \bfW_{23}\).} By
\eqref{eq:score-blocks} these enter every score linearly, so by the chain
rule each gradient is an expectation of the form
\(\bbE[\sum_l \gamma_l \, \bfu_{c_l}^{\otimes m}\,(\cdot)]\) with weights
\(\gamma_l\) built from the \(q_j\) and the residual
\(\bell - \bfu_{c_{i^*}}\), and with \(m \le 3\); this is the counterpart of
\cite[Eqs.~(C.4)--(C.6)]{li2024onelayer}. Applying \eqref{eq:tower} and
\cref{lem:simplex-props} term by term: a monomial containing a code
\(\bfu_{c_l}\) whose label appears exactly once has conditional expectation
\(\bbE[\bfu_{c_l}\mid \bfx_{1:N+1}] = \vzero_K\) by \cref{lem:simplex-props}(i);
a monomial containing \(\bfu_{c_l}\bfu_{c_l}^\top\) reduces to the constant
\(\tfrac{K-1}{K}\) by \cref{lem:simplex-props}(iii), leaving a residual factor
that is again linear in a single code and so has zero mean; and a monomial
containing two distinct codes has conditional expectation
\(\vzero_{K\times K}\) by \cref{lem:simplex-props}(ii). Hence
\(\nabla_{\bfW_{21}} L = \nabla_{\bfW_{13}} L = \nabla_{\bfW_{23}} L = \vzero\).

\emph{(iii) The block \(\bfW_{31}\).} By \eqref{eq:score-blocks} it appears
only in the term \(z_j\bfW_{31}\bfx_{N+1}\), that is, only in the query's own
score, contributing \(\bfx_{N+1}^\top\) to the gradient. Under the induction
hypothesis the weights \(q_j\) are given by \eqref{eq:attn-scalar} and are
therefore even in \(\bfx_{1:N+1}\), while \(\bfx_{N+1}^\top\) is odd; since
the input law is invariant under \(\bfx_{1:N+1} \mapsto -\bfx_{1:N+1}\)
(\cref{ass:train}, property 3), the expectation vanishes. This argument uses
no label structure and is identical to the binary case.
The indicator block $\bfW_{33}$ has a partial derivative of 1 with respect to the query's self-attention score, guaranteeing a non-zero update during gradient descent: 
\begin{equation}
\begin{split}
\tfrac{1}{\eta}\big(\xi_2^{t+1} - \xi_2^t\big)
&= -\partial_{\xi_2} L(\bfW^t) \;=\; \nabla_{\bfW_{33}} L(\bfW^t) \\
&= \tfrac{K-1}{K}\,
\bbE\Big[q_{N+1}\Big(q_{i^*} - \textstyle\sum_{j=1}^{N} q_j^2\Big)\Big].
\end{split}
\end{equation}
Expanding
$\nabla_{\bfW_{11}} L$ and using \cref{lem:simplex-props} to discard the label
terms leaves \eqref{eq:grad-W11}, in which the inputs enter only through inner
products. By \cite[Lemma~7]{li2024onelayer} (rotational invariance of the uniform
distribution on $\calS^{d-1}$), such a gradient is a scalar multiple of the
identity, $\nabla_{\bfW_{11}} L(\bfW^t) = a_t \bfI_d$. With $\bfW_{11}^0 = \vzero$
this gives $\bfW_{11}^{t+1} = \xi_1^{t+1}\bfI_d$, completing the induction.
\end{proof}

\subsection{Two-Dimensional Reduction}

\begin{proof}[Proof of \cref{lem:2d}]
The claim is the diagonal form established in the induction of
\cref{lem:closed-form-grad}; it remains to be seen why the $K\times K$ label block
 does not contribute free parameters. At initialization $\bfW^0_{22} = \vzero_{K\times K}$
by \cref{ass:init}. By the structural-zero argument above, $\bfW_{22}$ multiplies
the query's (zero) label rows in every score, so $\nabla_{\bfW_{22}} L = \vzero$
at every step; a block that starts at zero and receives a zero gradient stays at
zero. Thus, although the ambient dimension grows from $(d+2)^2$ to $(d+K+1)^2$,
the trajectory explored by gradient descent is governed by the two scalars
$\xi_1^t$ and $\xi_2^t$, exactly as in the binary case.
\end{proof}

\subsection{The Scaling Identity}

We now prove \cref{lem:scaling}.

\begin{proof}[Proof of \cref{lem:scaling}]
By \eqref{eq:loss} and \eqref{eq:logits}, expanding the squared norm over
coordinates,
\begin{equation*}
L_{\mathrm{MC}}(\xi_1,\xi_2) = \tfrac12 \sum_{m=1}^{K} \bbE\!\left[
\Big(\sum_{j=1}^{N} q_j\,[\bfu_{c_j}]_m - [\bfu_{c_{i^*}}]_m\Big)^2\right],
\end{equation*}
with the input-only weights $q_j = q_j(\bfx,\bfW)$ of \eqref{eq:attn-scalar}. Expanding the square gives a
quadratic term, a cross term, and a constant; we sum each over $m$ using
\cref{lem:simplex-props}. The quadratic term is
$\sum_{j,j'} q_j q_{j'} \sum_{m=1}^{K} \bbE[[\bfu_{c_j}]_m [\bfu_{c_{j'}}]_m]
= \tfrac{K-1}{K}\sum_{j} q_j^2$, since
$\sum_m \bbE[[\bfu_{c_j}]_m^2] = \norm{\bfu_{c_j}}_2^2 = \tfrac{K-1}{K}$ by
\cref{lem:simplex-props}(iii) while distinct tokens are uncorrelated by
\cref{lem:simplex-props}(ii). The cross term
$\sum_{j} q_j \sum_m \bbE[[\bfu_{c_j}]_m [\bfu_{c_{i^*}}]_m]$ contributes only
through $j = i^*$, giving $\tfrac{K-1}{K}\, q_{i^*}$ in expectation. The constant
term is $\bbE[\norm{\bfu_{c_{i^*}}}_2^2] = \tfrac{K-1}{K}$. Each is exactly
$\tfrac{K-1}{K}$ times the corresponding term of the binary-analog loss, in which
$y \in \{\pm1\}$ gives $\bbE[y^2] = 1$ and $\bbE[y_j y_{j'}] = 0$ for
$j \neq j'$. Therefore $L_{\mathrm{MC}} = \tfrac{K-1}{K} L_{\mathrm{bin}}$, and
differentiating gives the gradient identity.
\end{proof}

\begin{lemma}[Nonconvexity]\label{lem:nonconvex}
The reduced loss $L_{\mathrm{MC}}(\xi_1,\xi_2)$ is nonconvex.
\end{lemma}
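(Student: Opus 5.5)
By \cref{lem:scaling}, $L_{\mathrm{MC}} = \tfrac{K-1}{K} L_{\mathrm{bin}}$ with a positive constant, and convexity is preserved under positive scaling. So it suffices to show that $L_{\mathrm{bin}}$ is nonconvex, or to work directly with the explicit reduced form produced in the proof of \cref{lem:scaling}:
\[
L_{\mathrm{MC}}(\xi_1,\xi_2) = \tfrac{K-1}{2K}\,\bbE\Big[\textstyle\sum_{j=1}^N q_j^2 - 2q_{i^*} + 1\Big],
\]
where the $q_j$ are the input-only weights of \eqref{eq:attn-scalar}. The cleanest certificate is to restrict to a line in the $(\xi_1,\xi_2)$ plane and exhibit a one-variable function that is not convex. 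A function that is bounded, nonconstant, and defined on all of $\bbR$ cannot be convex, so I will aim for exactly that.

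I will take the line $\xi_1 = 0$ and vary $\xi_2 = s$. There the weights are explicit and deterministic: $q_j = 1/(N+e^{-s})$ for $j\in[N]$. Hence
\[
f(s) \coloneqq L_{\mathrm{MC}}(0,s) = \tfrac{K-1}{2K}\Big(\tfrac{N}{(N+e^{-s})^2} - \tfrac{2}{N+e^{-s}} + 1\Big).
\]
Writing $u = 1/(N+e^{-s}) \in (0,1/N)$, this becomes $f = \tfrac{K-1}{2K}(Nu^2 - 2u + 1)$. As $s\to-\infty$ we have $u\to 0$ and $f\to \tfrac{K-1}{2K}$. As $s\to+\infty$ we have $u\to 1/N$ and $f \to \tfrac{K-1}{2K}(1-1/N)$. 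On $(0,1/N)$ the map $u\mapsto Nu^2-2u+1$ is strictly decreasing, and $u$ is strictly increasing in $s$. So $f$ is strictly decreasing, nonconstant, and bounded below on $\bbR$.

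A convex function on $\bbR$ that is strictly decreasing somewhere must have negative slope at some point. Since a convex function lies above its tangent lines, it would then tend to $+\infty$ as $s\to -\infty$. This contradicts $f \to \tfrac{K-1}{2K}$, so $f$ is not convex. Equivalently, I could check the second derivative directly and find $f''(s)<0$ for sufficiently negative $s$. Because the restriction of a convex function to a line is convex, $L_{\mathrm{MC}}$ is nonconvex.

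The only delicate point is whether the explicit form of $L_{\mathrm{MC}}$ is legitimate at $\xi_1 = 0$, which is off the gradient-descent trajectory. It is: \cref{lem:scaling} and the reduced form are stated for the whole two-parameter family, and at $\xi_1=0$ the weights do not depend on the inputs at all, so the expectation trivializes. If one prefers a point with $\xi_1>0$, the same limiting argument in $s$ works for any fixed $\xi_1$. As $s\to-\infty$ the query absorbs all attention, so $q_j\to0$ and $L\to \tfrac{K-1}{2K}$, while $L$ at finite $s$ is strictly smaller. The limits are justified by dominated convergence, since all $q_j\in[0,1]$.
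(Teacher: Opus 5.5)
Your proof is correct, but it takes a different route from the paper. The paper's proof is a pure transfer. The scaling identity gives $L_{\mathrm{MC}}=\tfrac{K-1}{K}L_{\mathrm{bin}}$ with a positive constant, and nonconvexity of $L_{\mathrm{bin}}$ is imported wholesale from Appendix~E of \citet{li2024onelayer}. You use the scaling computation only to obtain the reduced form $\tfrac{K-1}{2K}\bbE[\sum_j q_j^2-2q_{i^*}+1]$. You then build an explicit certificate on the line $\xi_1=0$, where the weights are deterministic. The computation checks out: with $u=(N+e^{-s})^{-1}$ one has $u'=u(1-Nu)$, so $f''(s)=-\tfrac{K-1}{K}\,u(1-Nu)^2(1-3Nu)$, which is negative exactly for $s<-\log(2N)$. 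Your route is self-contained. With $\tfrac12$ in place of $\tfrac{K-1}{2K}$ it also re-proves the binary nonconvexity, so the external citation becomes unnecessary. The paper's route is shorter but rests entirely on that citation.

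Two small caveats. First, the property that excludes convexity is boundedness \emph{above} (your finite limit as $s\to-\infty$), not boundedness below. For example, $e^{-s}$ is strictly decreasing, bounded below, and convex. Your actual contradiction does use the right property. Likewise, your aside for fixed $\xi_1$ needs $\xi_1\ge0$, which gives $q_{i^*}\ge q_j$ and hence $\sum_j q_j^2<q_{i^*}$.

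Second, your witness lives in the degenerate regime $\xi_2<-\log(2N)$, where the query attends mostly to itself. This is far from the region $\xi_1\ge0$, $\xi_2\ge\sigma$ that gradient descent actually visits. It proves the lemma as literally stated, since the reduced loss is defined on all of $\bbR^2$. But the paper's proof uses nonconvexity to justify the trajectory analysis, and for that purpose a witness with large positive $\xi_2$ would be more to the point.
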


\begin{proof}
By \cref{lem:scaling}, $L_{\mathrm{MC}} = \tfrac{K-1}{K} L_{\mathrm{bin}}$ with
$\tfrac{K-1}{K} > 0$. The original binary reduced loss $L_{\mathrm{bin}}$ is
nonconvex \cite[Appendix~E]{li2024onelayer}, and a positive scalar multiple of a
nonconvex function is nonconvex; this is why $L_{\mathrm{MC}}$ is nonconvex, and
why the trajectory analysis below is needed.
\end{proof}

\subsection{Evolution of the Two Parameters}

Each bound below is the binary estimate of \cite{li2024onelayer} multiplied by
$\tfrac{K-1}{K}$ via \cref{lem:scaling}. The constants $c_1,\dots,c_4$, $c_1'$,
$c_2'$, $C_d$, and $a_{N,d} = (2N\sqrt{d})^{-2/(d-3)}$ are inherited unchanged
from the sphere geometry estimates \cite[Lemmas~17--20]{li2024onelayer}, which
have no labels.

\begin{lemma}[Increment bounds for $\xi_1$]\label{lem:xi1-incr}
For $\xi_1^t \ge 0$, there exist constants $c_1,c_2,c_3,c_4 > 0$ such that
\begin{align*}
\tfrac{d}{\eta}\big(\xi_1^{t+1} - \xi_1^t\big)
&\ge \tfrac{K-1}{K}\big(c_1\, e^{-6\xi_1^t} - c_2\, e^{2\xi_1^t - \xi_2^t}\big),\\
\tfrac{d}{\eta}\big(\xi_1^{t+1} - \xi_1^t\big)
&\le \tfrac{K-1}{K}\big(c_3\, e^{\poly(N,d)\,\xi_1^t}
- c_4\, e^{2(\xi_1^t - \xi_2^t)}\big).
\end{align*}
\end{lemma}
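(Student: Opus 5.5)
The plan is to reduce the statement to the corresponding binary increment bounds of \cite{li2024onelayer} through \cref{lem:scaling}, and then to check that the conversion from the gradient block to the scalar update is exactly as in the binary case. By \cref{lem:2d}, the iterates stay of the form $\bfW^t=\diag\{\xi_1^t\bfI_d,\vzero_{K\times K},-\xi_2^t\}$. By \cref{lem:closed-form-grad}, the gradient with respect to $\bfW_{11}$ is $a_t\bfI_d$. The gradient step therefore reads $\xi_1^{t+1}\bfI_d=\xi_1^t\bfI_d-\eta\,a_t\bfI_d$. Taking the trace of $\nabla_{\bfW_{11}}L$ gives $d\cdot a_t=\partial_{\xi_1}L_{\mathrm{MC}}(\xi_1^t,\xi_2^t)$, since $\xi_1$ enters $\bfW_{11}$ as $\xi_1\bfI_d$. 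Hence
\[
\tfrac{d}{\eta}\big(\xi_1^{t+1}-\xi_1^t\big) = -\partial_{\xi_1}L_{\mathrm{MC}}(\xi_1^t,\xi_2^t)= -\tfrac{K-1}{K}\,\partial_{\xi_1}L_{\mathrm{bin}}(\xi_1^t,\xi_2^t),
\]
where the last equality is \cref{lem:scaling}. The same normalization, with the factor $d$ on the left, appears in the binary analysis of \cite{li2024onelayer}. This is the first thing to verify carefully, because a mismatch would change the constants by a factor of $d$.

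Once this identity holds, the lemma reduces to two-sided bounds on $-\partial_{\xi_1}L_{\mathrm{bin}}$ for $\xi_1\ge0$. These are exactly the binary estimates of \cite{li2024onelayer}. Their derivation uses only the label-free expression
\[
L_{\mathrm{bin}}=\tfrac12\,\bbE\Big[\textstyle\sum_j q_j^2-2q_{i^*}+1\Big],
\]
obtained in the proof of \cref{lem:scaling}, together with the sphere-geometry estimates \cite[Lemmas~17--20]{li2024onelayer}. I would therefore invoke their increment lemma for $\xi_1$ verbatim, which yields constants $c_1,\dots,c_4$ depending only on $N,d$, and then multiply through by $\tfrac{K-1}{K}>0$. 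Multiplying by a positive constant preserves both inequalities.

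For a self-contained check, I would recompute $-\partial_{\xi_1}$ of the expression above. Differentiating the softmax \eqref{eq:attn-scalar} gives $\partial_{\xi_1}q_j=q_j(\langle\bfx_j,\bfx_{N+1}\rangle-\bar s)$, where $\bar s$ is the $q$-weighted average score and the query token contributes score $1$. The lower bound then splits into two parts. The first is a positive contribution from the nearest neighbor gaining mass, which is bounded below by $c_1e^{-6\xi_1}$ using anti-concentration of the gap between the top two inner products. The second is a negative contribution from the query's self-attention, which is of order $q_{N+1}\lesssim e^{2\xi_1-\xi_2}$ since scores lie in $[-1,1]$. The upper bound is handled symmetrically.

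The main obstacle is this lower bound $c_1e^{-6\xi_1}$. Proving it requires a quantitative lower estimate on the probability that the nearest neighbor is separated from the others by a margin. That estimate is exactly the geometry content I intend to import from \cite{li2024onelayer} rather than reprove. Since this geometry involves no labels, importing it is legitimate here.
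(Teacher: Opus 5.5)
Your proposal is correct and matches the paper's proof. Both use the diagonal-family invariance and the scaling identity to write the update as $-\tfrac{K-1}{K}\,\partial_{\xi_1}L_{\mathrm{bin}}$, then import the binary increment bounds of \cite[Lemma~4]{li2024onelayer} and multiply through by the positive factor $\tfrac{K-1}{K}$. Your trace computation $\partial_{\xi_1}L_{\mathrm{MC}} = d\,a_t$ is more careful than the paper's one-line step $\xi_1^{t+1}-\xi_1^t = -\eta\,\partial_{\xi_1}L_{\mathrm{MC}}$, which drops the factor $d$ that the stated normalization $\tfrac{d}{\eta}$ requires.
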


\begin{proof}
By \cref{lem:2d} the $\xi_1$ update is
$\xi_1^{t+1} - \xi_1^t = -\eta\,\partial_{\xi_1} L_{\mathrm{MC}}(\xi_1^t,\xi_2^t)$,
and from \cref{lem:scaling} we have
$\partial_{\xi_1} L_{\mathrm{MC}} = \tfrac{K-1}{K}\,\partial_{\xi_1} L_{\mathrm{bin}}$. Applying
\cite[Lemma~4]{li2024onelayer} to the binary gradient gives both inequalities.
\end{proof}

\begin{lemma}[Increment bounds for $\xi_2$]\label{lem:xi2-incr}
For $\xi_1^t \ge 0$, there exist constants $c_1', c_2' > 0$ such that
\begin{align*}
\tfrac{1}{\eta}\big(\xi_2^{t+1} - \xi_2^t\big)
&\ge \tfrac{K-1}{K}\, c_1'\, e^{-\poly(N,d)\,\xi_2^t},\\
\tfrac{1}{\eta}\big(\xi_2^{t+1} - \xi_2^t\big)
&\le \tfrac{K-1}{K}\, c_2'\, e^{-\poly(N,d)\,\xi_2^t}.
\end{align*} 
\end{lemma}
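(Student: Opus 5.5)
The argument follows \cref{lem:xi1-incr}: reduce the $\xi_2$ update to the binary one and apply the corresponding estimate of \cite{li2024onelayer}. By \cref{lem:2d}, the iterates stay on the family $\diag\{\xi_1^t\bfI_d,\vzero_{K\times K},-\xi_2^t\}$. Since $\xi_2$ enters $\bfW$ with a minus sign in the $(3,3)$ entry, the gradient step on that entry gives
\[
\xi_2^{t+1}-\xi_2^t=-\eta\,\partial_{\xi_2}L_{\mathrm{MC}}(\xi_1^t,\xi_2^t).
\]
\Cref{lem:scaling} then gives $\partial_{\xi_2}L_{\mathrm{MC}}=\tfrac{K-1}{K}\,\partial_{\xi_2}L_{\mathrm{bin}}$. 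So the normalized increment $\tfrac1\eta(\xi_2^{t+1}-\xi_2^t)$ equals $\tfrac{K-1}{K}$ times the binary increment, evaluated at the same point $(\xi_1^t,\xi_2^t)$.

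As a check on signs and the explicit form, I would redo the derivative in the scalar attention form \eqref{eq:attn-scalar}. The only dependence on $\xi_2$ is through the query term $e^{\xi_1-\xi_2}$ in the denominator, so
\[
\partial_{\xi_2}q_j=q_jq_{N+1}\quad\text{for } j\in[N].
\]
Differentiating the expanded loss from the proof of \cref{lem:scaling},
\[
\tfrac{K-1}{K}\,\bbE\Big[\tfrac12\sum_j q_j^2-q_{i^*}+\tfrac12\Big],
\]
gives
\[
-\partial_{\xi_2}L_{\mathrm{MC}}=\tfrac{K-1}{K}\,\bbE\Big[q_{N+1}\Big(q_{i^*}-\sum_j q_j^2\Big)\Big].
\]
This matches the expression recorded in the proof of \cref{lem:closed-form-grad}, and it has exactly the form of the binary $\xi_2$-gradient in \cite{li2024onelayer}.

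Next I would invoke the binary $\xi_2$ increment lemma of \cite{li2024onelayer}, the companion of their Lemma~4, valid for $\xi_1^t\ge0$. It states that
\[
\bbE\Big[q_{N+1}\Big(q_{i^*}-\sum_j q_j^2\Big)\Big]
\]
lies between $c_1'e^{-\poly(N,d)\xi_2}$ and $c_2'e^{-\poly(N,d)\xi_2}$. Multiplying both sides by $\tfrac{K-1}{K}>0$ keeps the inequalities and yields the claim with the same constants $c_1',c_2'$. These constants come from the label-free sphere geometry estimates \cite[Lemmas~17--20]{li2024onelayer}, so they carry over unchanged.

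The main point needing care is that the binary lemma is applied to precisely the same quantity. This holds because the weights $q_j$ are identical in the two models and depend only on inputs, so the factor $\tfrac{K-1}{K}$ is the only change. The real analytic obstacle is the lower bound. The quantity $q_{i^*}-\sum_j q_j^2$ is positive only on average: one needs $q_{i^*}$ to typically dominate the other weights, which uses the $\xi_1\ge0$ concentration on the nearest neighbor. Meanwhile $q_{N+1}\approx e^{\xi_1-\xi_2}/(\cdots)$ supplies the exponential decay in $\xi_2$. This lower bound is exactly what the cited binary lemma establishes, so I would import it rather than reprove it.
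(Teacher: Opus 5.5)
Your proposal is the paper's proof: \cref{lem:2d} and \cref{lem:scaling} make $\tfrac1\eta(\xi_2^{t+1}-\xi_2^t)$ equal to $\tfrac{K-1}{K}$ times the binary increment at the same $(\xi_1^t,\xi_2^t)$, and the binary $\xi_2$ lemma of \cite{li2024onelayer} (their Lemma~5) supplies both bounds; your explicit derivative $\tfrac{K-1}{K}\bbE[q_{N+1}(q_{i^*}-\sum_j q_j^2)]$ also matches the formula in the proof of \cref{lem:closed-form-grad}. One correction to your closing remarks: for $\xi_1\ge0$ the weight $q_{i^*}$ is the largest context weight, so $\sum_{j\in[N]} q_j^2\le q_{i^*}(1-q_{N+1})$ and hence $q_{i^*}-\sum_j q_j^2\ge q_{i^*}q_{N+1}\ge0$ pointwise rather than only on average, which is exactly the reduction to $\bbE[q_{i^*}q_{N+1}^2]$ used in \cref{lem:xi2-lower}.
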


\begin{proof}
The $\xi_2$ update is
$\xi_2^{t+1} - \xi_2^t = -\eta\,\partial_{\xi_2} L_{\mathrm{MC}}$, with
$\partial_{\xi_2} L_{\mathrm{MC}} = \tfrac{K-1}{K}\,\partial_{\xi_2} L_{\mathrm{bin}}$
by \cref{lem:scaling}, and \cite[Lemma~5]{li2024onelayer} bounds the binary
gradient. In particular the parameter updates follow the recurrence relation
$b_{t+1} - b_t \ge C e^{-\alpha b_t}$ with $C,\alpha>0$, so
$\xi_2^t = \Omega\big(\poly(N,d)\,\log t\big)$ and $\xi_2^t \to \infty$.
Note that the factor $\tfrac{K-1}{K}$, like the step size $\eta$, is
absorbed into $C$ and therefore contributes only an additive
$\tfrac{1}{\alpha}\log(\alpha C)$ to $\xi_2^t$; since
$\tfrac{K-1}{K} \in [\tfrac12, 1)$, this offset is bounded uniformly in
$K$ and does not affect the rate.
\end{proof}

\begin{lemma}[Combined lower bound]\label{lem:combined-lower}
For all $t \ge 0$,
\begin{equation*}
\tfrac{1}{\eta}\big(d(\xi_1^{t+1} - \xi_1^t) + 2(\xi_2^{t+1} - \xi_2^t)\big)
\ge \tfrac{K-1}{K}\big(1 - \tfrac{1}{2N}\big) C_d\, e^{-6\xi_1^t}.
\end{equation*}
\end{lemma}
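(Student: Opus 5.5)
The plan is to reduce the claim to the binary combined lower bound of \cite{li2024onelayer} by way of \cref{lem:scaling}, in the same way \cref{lem:xi1-incr,lem:xi2-incr} were obtained.

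\emph{Step 1: write the combination as a directional derivative.} By \cref{lem:2d}, one step of gradient descent on the full matrix induces steps on the two scalars. The $d$ diagonal entries of $\bfW_{11}$ each move by $-\eta\,\partial_{\xi_1}L/d$, which accounts for the factor $d$ on $\xi_1$. The entry $\bfW_{33}=-\xi_2$ gives the $\xi_2$ increment. So I would write
\[
\tfrac1\eta\big(d(\xi_1^{t+1}-\xi_1^t)+2(\xi_2^{t+1}-\xi_2^t)\big)
=-\big(\partial_{\xi_1}L_{\mathrm{MC}}+2\,\partial_{\xi_2}L_{\mathrm{MC}}\big)(\xi_1^t,\xi_2^t),
\]
with the normalization chosen to match the conventions already fixed in \cref{lem:xi1-incr,lem:xi2-incr}.

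\emph{Step 2: factor out $\tfrac{K-1}{K}$.} By \cref{lem:scaling}, the right-hand side of Step 1 equals $\tfrac{K-1}{K}$ times the same linear combination of the partial derivatives of $L_{\mathrm{bin}}$. Since $\tfrac{K-1}{K}>0$, the factor is linear and positive, so any lower bound on the binary combination transfers with this multiplier.

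\emph{Step 3: cite the binary combined bound.} The binary analysis in \cite{li2024onelayer} contains the combined estimate used to control the trajectory, the one whose constant $(1-\tfrac1{2N})C_d$ also appears in the $\sigma$ condition of \cref{thm:convergence}. That estimate states
\[
-\big(\partial_{\xi_1}L_{\mathrm{bin}}+2\partial_{\xi_2}L_{\mathrm{bin}}\big)\ge (1-\tfrac1{2N})C_d e^{-6\xi_1}.
\]
It holds on the region reached by the trajectory, where $\xi_1^t\ge 0$, which follows from the monotonicity of $\xi_1$ established in the appendix. Multiplying this by $\tfrac{K-1}{K}$ gives the claim.

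\emph{Main obstacle: justifying the binary bound rather than just citing it.} If it is to be re-derived, I would start from the explicit gradients in the attention variables $q_j$. One term is $\partial_{\xi_2}L_{\mathrm{bin}}\propto -\bbE[q_{N+1}(q_{i^*}-\sum_j q_j^2)]$. The other is the $\xi_1$ derivative, which involves the covariance of $\ip{\bfx_j}{\bfx_{N+1}}$ under $q$. The factor $2$ is chosen so that the negative $e^{2\xi_1-\xi_2}$ terms coming from query self-attention in \cref{lem:xi1-incr} cancel against the $\xi_2$ contribution. What remains is the positive $e^{-6\xi_1}$ term, obtained from the sphere-geometry estimates \cite[Lemmas~17--20]{li2024onelayer}. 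This cancellation is the delicate step. However, it involves only label-free quantities, so it is identical in the binary and multiclass settings, and the $K$-dependence enters solely through Step 2.
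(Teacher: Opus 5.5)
Your proposal is correct and is essentially the paper's proof: use \cref{lem:scaling} to rewrite the left-hand side as $\tfrac{K-1}{K}$ times the binary combination, then invoke the combined estimate of \cite[Lemma~8]{li2024onelayer}. Your Step~1 normalization $-(\partial_{\xi_1}L_{\mathrm{MC}}+2\,\partial_{\xi_2}L_{\mathrm{MC}})$ is in fact the one consistent with $\bfW_{11}=\xi_1\bfI_d$ and the $\tfrac{d}{\eta}$ convention of \cref{lem:xi1-incr}. One caveat: you are right that the binary bound needs $\xi_1^t\ge 0$, but do not justify this by the appendix's monotonicity of $\xi_1$, since \cref{lem:growth} gives that only in an initial phase and derives it from \cref{lem:xi1-refined}, which itself uses this lemma; instead carry $\xi_1^t\ge0$ as a hypothesis or as part of a joint induction on $t$.
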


\begin{proof}
The left-hand side equals
$\tfrac{K-1}{K}\big(d\,\partial_{\xi_1} L_{\mathrm{bin}}
+ 2\,\partial_{\xi_2} L_{\mathrm{bin}}\big)$ by \cref{lem:scaling}, to which
\cite[Lemma~8]{li2024onelayer} applies. The binary estimate uses only
$\ip{\bfx_{i^*}}{\bfx_{N+1}} \ge \ip{\bfx_j}{\bfx_{N+1}}$ and the constant $C_d$
from \cite[Lemma~17]{li2024onelayer}, both unchanged by $K$.
\end{proof}

\begin{lemma}[Sharp upper bound for the $\xi_1$ increment]\label{lem:xi1-upper}
For $\xi_1^t \ge 0$ and $N \ge O(\sqrt{d}\log d)$,
\begin{equation*}
\tfrac{1}{\eta}\big(\xi_1^{t+1} - \xi_1^t\big)
\le \tfrac{K-1}{K}\left(\tfrac{2N}{d}\, e^{-\frac{4}{(N+1)^2}\xi_1^t}
- \tfrac{a_{N,d}}{dN^3 e}\, e^{2(\xi_1^t - \xi_2^t)}\right).
\end{equation*}
\end{lemma}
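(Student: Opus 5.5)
The plan is to follow the pattern of \cref{lem:xi1-incr,lem:xi2-incr,lem:combined-lower}: reduce to the binary estimate through \cref{lem:scaling}. By \cref{lem:2d}, the $\xi_1$ update is
\[
\xi_1^{t+1}-\xi_1^t=-\eta\,\partial_{\xi_1}L_{\mathrm{MC}}(\xi_1^t,\xi_2^t).
\]
By \cref{lem:scaling}, this equals $-\eta\tfrac{K-1}{K}\,\partial_{\xi_1}L_{\mathrm{bin}}(\xi_1^t,\xi_2^t)$. It therefore suffices to show
\[
-\partial_{\xi_1}L_{\mathrm{bin}}\le \tfrac{2N}{d}e^{-\frac{4}{(N+1)^2}\xi_1}-\tfrac{a_{N,d}}{dN^3e}e^{2(\xi_1-\xi_2)}
\]
for $\xi_1\ge0$ and $N\ge O(\sqrt d\log d)$. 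This is the sharp binary upper bound of \cite{li2024onelayer}, the refinement of their Lemma~4 used in the proof of their Theorem~1. I would cite it directly. Its proof involves only the attention weights \eqref{eq:attn-scalar} and sphere geometry, with no labels, so it carries over unchanged.

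To make the plan self-contained, I would also sketch why the binary bound holds. Write $s_j=\ip{\bfx_j}{\bfx_{N+1}}$. Differentiating the binary loss, written as $\tfrac12\bbE[\sum_j q_j^2-2q_{i^*}+1]$, gives
\[
-\partial_{\xi_1}L_{\mathrm{bin}}=\bbE\Big[q_{i^*}(s_{i^*}-\bar s)-\textstyle\sum_j q_j^2(s_j-\bar s)\Big],
\]
where $\bar s=\sum_{l\le N+1}q_l s_l$ with $s_{N+1}=1$. This splits into two pieces.

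\emph{Positive piece.} The first term is at most $\bbE[q_{i^*}(s_{i^*}-\bar s)]$. Because $q_{i^*}$ is the largest context weight, the quantity $s_{i^*}-\bar s$ is controlled by the gap between $s_{i^*}$ and the other similarities. I would bound $\bbE[(1-q_{i^*})\cdot\text{gap}]$ using the density of the top two order statistics of $\{s_j\}$ on the sphere; the factor $1/d$ comes from $\bbE[s^2]=1/d$. The decay $e^{-4\xi_1/(N+1)^2}$ comes from the probability that the top gap is below roughly $(N+1)^{-2}$. On that event the softmax has not yet concentrated, and off it the contribution is exponentially small in $\xi_1$.

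\emph{Negative piece.} This comes from the query token. Since $s_{N+1}=1>s_j$, the weight $q_{N+1}\approx e^{\xi_1-\xi_2}/\sum_l e^{\xi_1 s_l}$ pulls $\bar s$ upward, which yields a term of size about $q_{N+1}q_{i^*}(1-s_{i^*})$. I would lower bound this on the event that $s_{i^*}\le 1-a_{N,d}$ and $q_{i^*}\gtrsim 1/N$. Here $a_{N,d}$ is exactly the quantile from \cite[Lemma~12]{li2024onelayer}, so this event has probability bounded below. Squaring the relevant softmax ratios produces the $e^{2(\xi_1-\xi_2)}$ factor and the $N^{-3}e^{-1}$ constant.

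The main obstacle is this second, negative piece. It requires a lower bound, not just an upper bound, uniformly in $\xi_1\ge0$, and it needs careful control of the denominators in \eqref{eq:attn-scalar}. I would defer this part to \cite{li2024onelayer} and only verify that it is label-free. Multiplying the binary bound by $\tfrac{K-1}{K}>0$ preserves the inequality and gives the statement.
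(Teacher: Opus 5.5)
Your plan is the paper's proof: the paper also obtains the bound by applying the scaling identity of \cref{lem:scaling} to the binary sharp upper bound \cite[Lemma~9]{li2024onelayer}, with the constant $a_{N,d}$ inherited from a label-free sphere estimate (the paper attributes it to \cite[Lemma~19]{li2024onelayer}). The self-contained sketch is not needed. One bookkeeping correction: because $\bfW_{11}=\xi_1\bfI_d$, the chain rule gives $\xi_1^{t+1}-\xi_1^t=-\tfrac{\eta}{d}\,\partial_{\xi_1}L_{\mathrm{MC}}$, which naturally accounts for the common $\tfrac1d$ in the bound (cf.\ the $\tfrac{d}{\eta}$ normalization in \cref{lem:xi1-incr}); this factor is the same in the binary and multiclass updates, so it does not affect the transfer by $\tfrac{K-1}{K}$.
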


\begin{proof}
Apply \cref{lem:scaling} to \cite[Lemma~9]{li2024onelayer}; the constant
$a_{N,d}$ comes from the sphere tail bound \cite[Lemma~19]{li2024onelayer} and is
unchanged.
\end{proof}

\begin{lemma}[Refined lower bound for the $\xi_1$ increment]\label{lem:xi1-refined}
For $\xi_1^t \ge 0$,
\begin{equation*}
\tfrac{d}{\eta}\big(\xi_1^{t+1} - \xi_1^t\big)
\ge \tfrac{K-1}{K}\big[\big(1 - \tfrac{1}{2N}\big)C_d\, e^{-6\xi_1^t}
- 2\, e^{2\xi_1^t - 2\xi_2^t}\big].
\end{equation*}
\end{lemma}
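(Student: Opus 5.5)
The plan follows the pattern of the preceding increment lemmas. By \cref{lem:2d}, the $\xi_1$ update is $\xi_1^{t+1}-\xi_1^t = -\eta\,\partial_{\xi_1}L_{\mathrm{MC}}(\xi_1^t,\xi_2^t)$. By \cref{lem:scaling}, this derivative equals $\tfrac{K-1}{K}\,\partial_{\xi_1}L_{\mathrm{bin}}$. The claim therefore reduces to the binary inequality
\[
-d\,\partial_{\xi_1}L_{\mathrm{bin}}(\xi_1,\xi_2) \ge \big(1-\tfrac{1}{2N}\big)C_d\,e^{-6\xi_1} - 2\,e^{2\xi_1-2\xi_2},
\]
with $C_d$ the constant of \cite[Lemma~17]{li2024onelayer}. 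This is a label-free statement about the attention weights \eqref{eq:attn-scalar}, and I expect it to be the refined lower bound proved in \cite{li2024onelayer} (the companion of their Lemma~9). If so, the proof is simply to invoke that bound and multiply by $\tfrac{K-1}{K}$.

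If a self-contained derivation is needed instead, I would proceed in three steps.

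First, write $-\partial_{\xi_1}L_{\mathrm{bin}}$ explicitly. Using $\partial_{\xi_1}q_j = q_j(s_j - \bar s)$, where $s_j=\ip{\bfx_j}{\bfx_{N+1}}$ for $j\le N$, $s_{N+1}=1$, and $\bar s=\sum_l q_l s_l$, the label moments of \cref{lem:simplex-props} give
\[
-\partial_{\xi_1}L_{\mathrm{bin}} = \bbE\Big[q_{i^*}(s_{i^*}-\bar s) - \sum_{j\le N} q_j^2(s_j-\bar s)\Big].
\]

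Second, split this into the part computed as if the query token were absent and a correction from $q_{N+1}$. For the first part, note that $s_{i^*}\ge s_j$ for all $j\le N$, so $q_{i^*}$ is the largest context weight. The combination $q_{i^*}(s_{i^*}-\bar s)-\sum_j q_j^2(s_j-\bar s)$ is then nonnegative and is bounded below by a multiple of the gap between $s_{i^*}$ and the second-largest score. This is exactly the quantity that \cite[Lemma~17]{li2024onelayer} lower-bounds in expectation by $(1-\tfrac{1}{2N})C_d e^{-6\xi_1}/d$; the factor $d$ comes from the $\bfI_d$ normalization in \cite[Lemma~7]{li2024onelayer}.

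Third, control the query correction. Its size is at most a constant times $q_{N+1}$, since $|s_j-\bar s|\le 2$. With $\xi_1\ge 0$,
\[
q_{N+1} \le \frac{e^{\xi_1-\xi_2}}{\sum_{l\le N} e^{\xi_1 s_l}}.
\]
Bounding the denominator below by $e^{\xi_1 s_{i^*}}$ is too weak, so I would not rely on it alone. Instead I would pair the correction with the factor $q_{N+1}$ that already appears in the $\xi_2$-gradient structure, producing the square $e^{2(\xi_1-\xi_2)}$. This step is the main obstacle: getting the constant exactly $2$ and the exponent $2\xi_1-2\xi_2$, rather than the cruder $e^{2\xi_1-\xi_2}$ of \cref{lem:xi1-incr}. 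I would argue that the correction is quadratic in $q_{N+1}$: the terms linear in $q_{N+1}$ cancel between $q_{i^*}(s_{i^*}-\bar s)$ and $\sum_j q_j^2(\cdot)$ after renormalizing the weights. The remaining quadratic term is then bounded using $q_{N+1}\le e^{\xi_1-\xi_2}$ and $|s_j-\bar s|\le 2$.

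Finally, multiply through by $\tfrac{K-1}{K}$ to obtain the stated bound.
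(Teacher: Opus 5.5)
Your primary route is the paper's route in substance: use \cref{lem:2d,lem:scaling} to reduce the claim to a label-free binary estimate from \cite{li2024onelayer}, then restore the factor $\tfrac{K-1}{K}$. The paper does not invoke a single binary ``refined'' lemma, though. It writes $d(\xi_1^{t+1}-\xi_1^t)=\big[d(\xi_1^{t+1}-\xi_1^t)+2(\xi_2^{t+1}-\xi_2^t)\big]-2(\xi_2^{t+1}-\xi_2^t)$. It lower-bounds the bracket by \cref{lem:combined-lower} and subtracts twice an upper bound on the $\xi_2$ increment, with $\tfrac{K-1}{K}$ factoring out of both. (The text cites \cref{lem:xi1-upper} at this step, but the subtracted quantity is the $\xi_2$ increment.)

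Your self-contained fallback, however, fails at Step 3: the terms linear in $q_{N+1}$ do not cancel. Take $s_j$, $\bar s$ as in your Step 1 and the context-only mean $\tilde s \coloneqq (1-q_{N+1})^{-1}\sum_{j\le N} q_j s_j$. Then $s_j-\bar s=(s_j-\tilde s)-q_{N+1}(1-\tilde s)$, and therefore
\begin{align*}
&q_{i^*}(s_{i^*}-\bar s)-\sum_{j\le N}q_j^2(s_j-\bar s)\\
&\quad=\Big[q_{i^*}(s_{i^*}-\tilde s)-\sum_{j\le N}q_j^2(s_j-\tilde s)\Big]
-q_{N+1}(1-\tilde s)\Big(\sum_{j\le N,\,j\ne i^*}q_j(q_{i^*}-q_j)+q_{i^*}q_{N+1}\Big).
\end{align*}
For $\xi_1>0$ and $N\ge 2$, the first-order piece $-q_{N+1}(1-\tilde s)\sum_{j\ne i^*}q_j(q_{i^*}-q_j)$ is strictly negative almost surely under \cref{ass:train}. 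Renormalizing via $q_j=(1-q_{N+1})p_j$ does not remove it; at first order it reappears as $-q_{N+1}(1-\tilde s)\sum_{j\ne i^*}p_j(p_{i^*}-p_j)$.

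What does hold is the following. The last factor times $q_{N+1}$ equals $q_{N+1}(q_{i^*}-\sum_{j\le N}q_j^2)$, which is exactly the integrand of the $\xi_2$ update displayed in the proof of \cref{lem:closed-form-grad}. Since also $1-\tilde s\le 2$, the correction is at least minus twice that integrand. This is the mechanism behind \cref{lem:combined-lower}. What remains is an upper bound on $\xi_2^{t+1}-\xi_2^t$ of order $e^{2\xi_1-2\xi_2}$, which your outline never supplies. Note that this increment contains the same first-order piece $q_{N+1}\sum_{j\ne i^*}q_j(q_{i^*}-q_j)$, so it cannot be treated as a ``quadratic remainder'' either.

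Separately, $q_{N+1}\le e^{\xi_1-\xi_2}$ is not valid pointwise, because it requires $\ip{\bfx_{i^*}}{\bfx_{N+1}}\ge 0$. In general only $q_{N+1}\le e^{\xi_1(1-\ip{\bfx_{i^*}}{\bfx_{N+1}})-\xi_2}\le e^{2\xi_1-\xi_2}$ holds, which is the cruder exponent appearing in \cref{lem:xi1-incr}.
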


\begin{proof}
Combine $d$ times \cref{lem:combined-lower} with $2$ times the upper bound of
\cref{lem:xi1-upper}; the common factor $\tfrac{K-1}{K}$ factors out.
\end{proof}

\begin{lemma}[Sharp lower bound for the $\xi_2$ increment]\label{lem:xi2-lower}
For $\xi_1^t \ge 0$,
\begin{equation*}
\tfrac{1}{\eta}\big(\xi_2^{t+1} - \xi_2^t\big)
\ge \tfrac{K-1}{K}\cdot\tfrac{1}{(N+1)^3 e}\, e^{2 a_{N,d}\xi_1^t - 2\xi_2^t}.
\end{equation*}
\end{lemma}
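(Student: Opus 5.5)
The plan is to follow the pattern used for every preceding increment bound: reduce to the binary estimate through \cref{lem:scaling} and then quote the matching binary lemma of \cite{li2024onelayer}. By \cref{lem:2d}, the $\xi_2$ update is
\[
\xi_2^{t+1}-\xi_2^t=-\eta\,\partial_{\xi_2}L_{\mathrm{MC}}(\xi_1^t,\xi_2^t),
\]
and \cref{lem:scaling} gives $\partial_{\xi_2}L_{\mathrm{MC}}=\tfrac{K-1}{K}\,\partial_{\xi_2}L_{\mathrm{bin}}$. It therefore suffices to show
\[
-\partial_{\xi_2}L_{\mathrm{bin}}(\xi_1,\xi_2)\ \ge\ \tfrac{1}{(N+1)^3e}\,e^{2a_{N,d}\xi_1-2\xi_2}
\]
for $\xi_1\ge0$. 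This is exactly the sharp binary lower bound of \cite{li2024onelayer}; I expect it to be the companion of their Lemma~9, namely Lemma~10. Once it is cited, multiplying by the positive factor $\tfrac{K-1}{K}$ finishes the proof.

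If the estimate has to be reproduced rather than cited, I would start from the closed form computed in the proof of \cref{lem:closed-form-grad}:
\[
-\partial_{\xi_2}L_{\mathrm{MC}}=\tfrac{K-1}{K}\,\bbE\big[q_{N+1}\big(q_{i^*}-\textstyle\sum_j q_j^2\big)\big].
\]
The attention weights $q_j$ are taken from \eqref{eq:attn-scalar}. The argument then has three steps.

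\textbf{Step 1 (query weight).} Bound $q_{N+1}$ from below. The denominator in \eqref{eq:attn-scalar} is at most $(N+1)e^{\xi_1}$, so
\[
q_{N+1}\ \ge\ e^{-\xi_2}/(N+1).
\]

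\textbf{Step 2 (nearest-neighbor gap).} Bound $q_{i^*}-\sum_j q_j^2$ from below. Since $q_{i^*}\ge q_j$ for every $j$,
\[
\textstyle\sum_j q_j^2\le q_{i^*}\sum_{j\le N} q_j=q_{i^*}(1-q_{N+1}),
\]
which gives $q_{i^*}-\sum_j q_j^2\ge q_{i^*}q_{N+1}$. Steps 1 and 2 together yield the pointwise bound $q_{N+1}^2 q_{i^*}$.

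\textbf{Step 3 (take expectations).} Take expectations of $q_{N+1}^2 q_{i^*}$. The factor $q_{N+1}^2$ supplies $e^{-2\xi_2}$ times a ratio of the form $e^{2\xi_1}/(\sum_l e^{\xi_1\ip{\bfx_l}{\bfx_{N+1}}}+e^{\xi_1-\xi_2})^2$. Here I would restrict to the sphere event from \cite[Lemma~19]{li2024onelayer} on which all inner products $\ip{\bfx_l}{\bfx_{N+1}}$ are at most $1-a_{N,d}$; I am unsure of the exact form of this event and would match it to the paper's lemma. On that event the denominator is at most $(N+1)e^{\xi_1(1-a_{N,d})}$, producing the growth factor $e^{2a_{N,d}\xi_1}$. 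The probability of the event contributes the remaining constants, which are absorbed into the factor $1/e$ and the power $(N+1)^{-3}$.

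The main obstacle is Step 3: tracking the sphere-geometry event and its probability precisely enough to obtain the stated constants $(N+1)^{-3}e^{-1}$ and the exponent $2a_{N,d}\xi_1$. In practice I would defer to the binary lemma for this. The multiclass-specific content of the proof is only the label-free form of $q_j$ and the scaling factor $\tfrac{K-1}{K}$ from \cref{lem:scaling}.
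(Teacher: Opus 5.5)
Your primary route is exactly the paper's proof. You reduce through \cref{lem:scaling} to the binary estimate. That estimate rests on the pointwise bound $q_{N+1}\big(q_{i^*}-\sum_{j\le N}q_j^2\big)\ge q_{i^*}q_{N+1}^2$, which is your Step 2 and is correct. It also uses the event $1-\ip{\bfx_{i^*}}{\bfx_{N+1}}\ge a_{N,d}$ of probability at least $1/e$; your ``all inner products at most $1-a_{N,d}$'' is the same event, since $i^*$ maximizes the inner product. The paper then writes $q_{i^*}q_{N+1}^2=q_{i^*}^3\exp\big(2\xi_1(1-\ip{\bfx_{i^*}}{\bfx_{N+1}})-2\xi_2\big)$ and uses $q_{i^*}^3\ge(N+1)^{-3}$.

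Your self-contained Step 3 does not close as written. Step 1 plays no role: the pointwise bound comes from Step 2 alone, and inserting Step 1 would discard the factor $e^{2a_{N,d}\xi_1}$.

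More importantly, you never bound the remaining factor $q_{i^*}$ from below. The probability of the event supplies only the $1/e$. The third factor $(N+1)^{-1}$ has to come from $q_{i^*}\ge 1/(N+1)$, which holds only if $q_{i^*}\ge q_{N+1}$, i.e.\ $\xi_2\ge\xi_1(1-\ip{\bfx_{i^*}}{\bfx_{N+1}})$. Similarly, your denominator bound $(N+1)e^{\xi_1(1-a_{N,d})}$ needs the query term to satisfy $e^{\xi_1-\xi_2}\le e^{\xi_1(1-a_{N,d})}$, i.e.\ $\xi_2\ge a_{N,d}\xi_1$.

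Neither condition follows from $\xi_1\ge 0$, and some such condition is unavoidable. Consider the bound on $-\partial_{\xi_2}L_{\mathrm{MC}}$ at the point $\xi_2=0$. The left side is at most $\tfrac{K-1}{4K}$, because $q_{i^*}q_{N+1}\le\tfrac14$. The right side diverges as $\xi_1\to\infty$.

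The fix is to invoke $\xi_2^t\ge 2\xi_1^t$. Along the trajectory this is supplied by the ratio bound $\xi_1^t\le\tfrac{7}{15}\xi_2^t$ of \cref{lem:ratio}. The paper's use of $q_{i^*}^3\ge(N+1)^{-3}$ relies on the same unstated condition. This is not circular: your Step 2 already shows $\xi_2^{t+1}\ge\xi_2^t$ for any $\xi_1^t\ge 0$, and that monotonicity is all \cref{lem:ratio} needs from the present lemma.
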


\begin{proof}
The binary proof bounds
$\tfrac{1}{\eta}(\xi_2^{t+1}-\xi_2^t)
\ge \bbE[q_{i^*}(\bfx,\bfW^t)\,q_{N+1}^2(\bfx,\bfW^t)]$; by \cref{lem:scaling}
the multiclass quantity acquires $\tfrac{K-1}{K}$. The remaining estimates --- the
concentration bound $1 - \ip{\bfx_{i^*}}{\bfx_{N+1}} \ge a_{N,d}$, which holds
with probability at least $\tfrac1e$ by \cite[Lemma~19]{li2024onelayer}, and $q_{i^*}^3 \ge (N+1)^{-3}$ --- involve only
geometry and are unchanged.
\end{proof}

\subsection{Ratio Bound and Growth Rates}

\begin{lemma}[Ratio bound]\label{lem:ratio}
If $\sigma = \xi_2^0 \ge 3\log\big(\tfrac{2N^4 d}{a_{N,d}}\big)$ and
$\xi_1^t \ge 0$ for all $t$, then $\xi_1^t \le \tfrac{7}{15}\xi_2^t$ for all
$t \ge 0$.
\end{lemma}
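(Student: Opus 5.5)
We argue by induction on \(t\), proving the invariant \(\xi_1^t \le \tfrac{7}{15}\xi_2^t\) together with a quantitative slack. At \(t=0\) we have \(\xi_1^0 = 0\) and \(\xi_2^0 = \sigma > 0\), so the invariant holds. The inductive step is a case split on the size of the gap \(\xi_2^t - \xi_1^t\), or equivalently on how close the ratio \(\xi_1^t/\xi_2^t\) is to \(\tfrac{7}{15}\). This mirrors the ratio argument of \cite{li2024onelayer}. By \cref{lem:scaling}, every increment bound we use carries the same factor \(\tfrac{K-1}{K}\), and this factor cancels in any comparison between the \(\xi_1\) and \(\xi_2\) increments.

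\emph{Case 1: the ratio is well below \(\tfrac{7}{15}\).} Say \(\xi_1^t \le \tfrac{7}{15}\xi_2^t - \Delta\) for a fixed margin \(\Delta\). A single step cannot close this margin. By \cref{lem:xi1-upper}, dropping the negative term, the \(\xi_1\) increment is at most \(\eta\tfrac{K-1}{K}\tfrac{2N}{d}\). By \cref{lem:xi2-lower}, \(\xi_2\) is nondecreasing. Hence \(\xi_1^{t+1} \le \tfrac{7}{15}\xi_2^{t+1}\), provided \(\eta\) (or \(\Delta\)) is chosen so that \(\eta\tfrac{2N}{d}\le\Delta\).

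\emph{Case 2: the ratio is near \(\tfrac{7}{15}\).} Here \(\xi_1^t \in [\tfrac{7}{15}\xi_2^t-\Delta,\ \tfrac{7}{15}\xi_2^t]\), and we show that the \(\xi_1\) increment is at most \(\tfrac{7}{15}\) times the \(\xi_2\) increment, so the ratio cannot cross \(\tfrac{7}{15}\).

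In this regime, \cref{lem:xi1-upper} gives
\[
\xi_1^{t+1}-\xi_1^t \le \eta\tfrac{K-1}{K}\tfrac{2N}{d}\, e^{-\frac{4}{(N+1)^2}\xi_1^t}.
\]
\Cref{lem:xi2-lower} gives
\[
\xi_2^{t+1}-\xi_2^t \ge \eta\tfrac{K-1}{K}\tfrac{1}{(N+1)^3e}\, e^{2a_{N,d}\xi_1^t-2\xi_2^t},
\]
and we substitute \(\xi_2^t \approx \tfrac{15}{7}\xi_1^t\) into it.

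It is not obvious that the exponents compare favorably in this form. What we would try first is to use the negative term of \cref{lem:xi1-upper}. Since \(\xi_1-\xi_2 \approx -\tfrac{8}{7}\xi_1\), the term \(-\tfrac{a_{N,d}}{dN^3e}e^{2(\xi_1-\xi_2)}\) is comparable in size to the positive part of the \(\xi_2\) lower bound. Moreover, the \(\xi_2\) gradient and the negative part of the \(\xi_1\) gradient both arise from the same quantity \(\bbE[q_{N+1}(\cdot)]\). Tracking this common term, in the spirit of \cref{lem:combined-lower}, should show that whenever the ratio is near \(\tfrac{7}{15}\), either the \(\xi_1\) increment is nonpositive or it is dominated by \(\tfrac{7}{15}\) times the \(\xi_2\) increment. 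The threshold \(\sigma \ge 3\log(2N^4d/a_{N,d})\) enters exactly here: it guarantees that \(\xi_2^t \ge \sigma\) is large enough for the constants \(2N^4 d/a_{N,d}\) to be absorbed by the exponential factors. Throughout, we use \(\xi_1^t\ge0\) so that all the increment lemmas apply.

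The main obstacle is Case 2, namely making the exponent comparison rigorous with the specific constant \(\tfrac{7}{15}\) and the specific \(\sigma\)-threshold. Since the multiclass quantities are exactly \(\tfrac{K-1}{K}\) times the binary ones, the fallback is to apply \cref{lem:scaling} and invoke the binary ratio lemma of \cite{li2024onelayer} directly. The trajectory \((\xi_1^t,\xi_2^t)\) under step size \(\eta\) for \(L_{\mathrm{MC}}\) coincides with the binary trajectory under step size \(\eta\tfrac{K-1}{K}\). Hence the binary ratio bound, which holds for any sufficiently small step size, transfers verbatim, and the claim follows.
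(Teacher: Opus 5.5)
Your proof rests on the fallback, which is correct. It reaches the binary result by a different reduction than the paper uses.

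\textbf{How the paper reduces.} It works at the level of a single inequality. It cancels $\tfrac{K-1}{K}$ from the sign condition for the $\xi_1$ increment in \cref{lem:xi1-upper}. It then cites \cite[Lemma~12]{li2024onelayer} for the claim that this condition holds on $\{\tfrac{15}{7}\xi_1 \ge \xi_2\}$, and concludes from the monotonicity of $\xi_2$.

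\textbf{How you reduce.} You work at the level of the whole trajectory. By \cref{lem:2d} and \cref{lem:scaling}, the $(\xi_1,\xi_2)$ update is $\eta$ times a fixed, $K$-independent rescaling of $-\nabla L_{\mathrm{MC}} = -\tfrac{K-1}{K}\nabla L_{\mathrm{bin}}$. So the multiclass iterates from $(0,\sigma)$ are exactly the binary iterates at step size $\eta\tfrac{K-1}{K}$, and the binary lemma, including its hypothesis $\xi_1^t \ge 0$, transfers verbatim.

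\textbf{What each route buys.} Yours needs only the statement of the binary lemma. You must also check that any step-size hypothesis it carries survives multiplication by $\tfrac{K-1}{K} \in [\tfrac12, 1)$. It inherits the binary treatment of a single step jumping over the line $\xi_1 = \tfrac{7}{15}\xi_2$. The paper's ``cannot grow past'' sentence leaves that issue implicit, and your Case~1 rightly flags it. The paper's route, in exchange, isolates which inequality is free of $K$.

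\textbf{Drop Cases~1--2 from the write-up.} Case~1 needs a step-size assumption that is not in the statement. Case~2 cannot be closed from the displayed bounds. On the line $\xi_2 = \tfrac{15}{7}\xi_1$, the positive term of \cref{lem:xi1-upper} decays only like $e^{-4\xi_1/(N+1)^2}$. Its negative term and the bound of \cref{lem:xi2-lower} both decay at least like $e^{-16\xi_1/7}$, since $a_{N,d} \le 1$. So the upper bound on the $\xi_1$ increment can never be dominated by $\tfrac{7}{15}$ times the lower bound on the $\xi_2$ increment.

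\textbf{A consequence for the paper's proof.} The same count shows that, on this line, the explicit sign condition written in the paper's proof fails. Its right side is at most $\tfrac{a_{N,d}}{2N^4 e} < 1$ times its left side. So the real content lives inside the cited binary lemma, not in \cref{lem:xi1-upper}. That makes your whole-trajectory transfer the more robust of the two reductions.
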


\begin{proof}
By \cref{lem:xi1-upper}, the increment $\xi_1^{t+1} - \xi_1^t$ is nonpositive once
\begin{equation*}
\tfrac{K-1}{K}\cdot\tfrac{2N}{d}\, e^{-\frac{4}{(N+1)^2}\xi_1^t}
< \tfrac{K-1}{K}\cdot\tfrac{a_{N,d}}{dN^3 e}\, e^{2(\xi_1^t - \xi_2^t)}.
\end{equation*}
The factor $\tfrac{K-1}{K} > 0$ cancels from both sides, so the resulting
condition on $(\xi_1^t, \xi_2^t)$ is exactly the threshold comparison of the
binary analysis: no label code enters it, and it involves only the two scalars
and the sphere constant $a_{N,d}$. Under the stated bound on $\sigma$,
\cite[Lemma~12]{li2024onelayer} shows that the condition is met whenever
$\tfrac{15}{7}\xi_1^t \ge \xi_2^t$, so $\xi_1^{t+1} - \xi_1^t \le 0$ there.
Hence $\xi_1^t$ cannot grow past $\tfrac{7}{15}\xi_2^t$; since $\xi_2^t$ does
not decrease by \cref{lem:xi2-lower}, the bound holds for all $t$. Only
$\xi_1^t \le c\,\xi_2^t$ for some constant $c < \tfrac12$ is used downstream,
in \cref{thm:convergence} and \cref{thm:shift}.
\end{proof}

\begin{lemma}[Growth rates]\label{lem:growth}
With $\sigma$ and $N$ as in \cref{thm:convergence},
$\ \xi_1^t, \xi_2^t = \Omega\big(\poly(N,d)\,\log t\big)$,
with $\xi_1^t \le \tfrac{7}{15}\xi_2^t$ for all $t \ge 0$.
\end{lemma}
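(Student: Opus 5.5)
The plan is to derive the growth rates from the increment bounds in \cref{lem:xi2-incr,lem:xi1-refined,lem:xi2-lower} together with the ratio bound of \cref{lem:ratio}, following the structure of the binary argument in \cite{li2024onelayer}. The factor $\tfrac{K-1}{K}\in[\tfrac12,1)$ will be absorbed into constants throughout. The upper bound $\xi_1^t \le \tfrac{7}{15}\xi_2^t$ is immediate from \cref{lem:ratio}, once we check that its hypotheses hold. First, the condition on $\sigma$ in \cref{thm:convergence} implies $\sigma \ge 3\log(2N^4 d/a_{N,d})$; this uses the identification of the middle term with $-\log a_{N,d}$ up to constants, as in \cite[Lemma~12]{li2024onelayer}. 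Second, we need $\xi_1^t \ge 0$ for all $t$, which I would prove by induction together with the rest.

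\emph{Step 1 ($\xi_2$).} The lower bound in \cref{lem:xi2-incr} gives a recursion $b_{t+1}-b_t \ge C e^{-\alpha b_t}$ with $\alpha = \poly(N,d)$ and $C = \eta\tfrac{K-1}{K}c_1'$. A comparison with the ODE $\dot b = C e^{-\alpha b}$, whose solution is $b(t)=\tfrac1\alpha\log(e^{\alpha b_0}+\alpha C t)$, yields $\xi_2^t \ge \tfrac1\alpha\log(\alpha C t)$. The comparison works because $b\mapsto b+Ce^{-\alpha b}$ is increasing when $\alpha C$ is small, i.e.\ for small $\eta$. The matching upper bound in \cref{lem:xi2-incr} gives $\xi_2^t = O(\poly\log t)$ as well. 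Hence $\xi_2^t = \Theta(\log t)$ up to $\poly(N,d)$ factors.

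\emph{Step 2 ($\xi_1$, nonnegativity and growth).} Here I would use \cref{lem:xi1-refined}. Its negative term $2e^{2\xi_1^t-2\xi_2^t}$ is, by the ratio bound, at most $2e^{-\frac{16}{15}\xi_2^t}$. Its positive term $(1-\tfrac1{2N})C_d e^{-6\xi_1^t}$ dominates as long as $6\xi_1^t < \tfrac{16}{15}\xi_2^t - \log(2/((1-\tfrac1{2N})C_d))$. At $t=0$ we have $\xi_1^0=0$, and the condition $\sigma > 2C_d(1-\tfrac1{2N})$ together with $\sigma>2\log(Nd)$ makes the increment positive. This gives $\xi_1^1\ge 0$, and the induction continues: whenever $\xi_1^t$ is small relative to $\xi_2^t$, the increment is at least $\tfrac{c}{d}e^{-6\xi_1^t}$.

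Where this inequality holds, the recursion $\xi_1^{t+1}-\xi_1^t \ge c e^{-6\xi_1^t}$ gives $\xi_1^t \ge \tfrac16\log(ct)$. Where it fails, $\xi_1^t \gtrsim \tfrac{8}{45}\xi_2^t = \Omega(\log t)$ by Step 1. Taking the minimum of the two cases yields $\xi_1^t = \Omega(\poly(N,d)\log t)$.

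\emph{Main obstacle.} The delicate step is the case split in Step 2. The trajectory may cross the threshold $6\xi_1 \approx \tfrac{16}{15}\xi_2$ repeatedly, so I must make sure $\xi_1$ cannot drop far below that threshold after crossing it. I would handle this by bounding the possible decrease per step: by \cref{lem:xi1-incr}, any single-step decrease is at most $\eta\tfrac{K-1}{K}c_2 e^{2\xi_1-\xi_2}/d$, which is tiny under the ratio bound. Then below the threshold the increment is positive again, so $\xi_1^t \ge \min\{\tfrac16\log(ct),\tfrac{8}{45}\xi_2^t\}-O(1)$ holds for all $t$. Since only these scalar recursions are involved, every step is the binary argument of \cite[Lemmas~10--13]{li2024onelayer} with constants rescaled by $\tfrac{K-1}{K}$.
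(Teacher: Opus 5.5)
Your proposal follows the paper's proof: comparison with the continuous flow for $\xi_2$, the nonnegativity threshold from \cref{lem:xi1-refined} for $\xi_1$, and \cref{lem:ratio} for the upper bound. Your case split, with the per-step decrease controlled by \cref{lem:xi1-incr}, also covers the regime after the ``initial phase'', which the paper's proof leaves implicit.

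Two small tightenings. First, you do not need a small-$\eta$ assumption: $b_{t+1}\ge b_t+Ce^{-\alpha b_t}$ gives $e^{\alpha b_{t+1}}\ge e^{\alpha b_t}(1+\alpha Ce^{-\alpha b_t})=e^{\alpha b_t}+\alpha C$ for any step size. Second, since you only know $\xi_2^t=\Theta(\log t)$, your final minimum should use $\xi_2^{\lceil t/2\rceil}$ rather than $\xi_2^t$, splitting on whether the last above-threshold time falls before or after $t/2$; this still gives $\xi_1^t=\Omega(\log t)$.
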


\begin{proof}
For $\xi_2$, \cref{lem:xi2-incr,lem:xi2-lower} bound the increment between two
exponentially decaying rates,
$\tfrac{K-1}{K}\cdot\tfrac{\eta}{(N+1)^3 e}\, e^{-2\xi_2^t}
\le \xi_2^{t+1} - \xi_2^t \le \tfrac{K-1}{K}\,\eta\, e^{-\xi_2^t/15}$; comparison
with the corresponding continuous flow gives
$\xi_2^t = \Omega\big(\poly(N,d)\,\log t\big)$, the factors
$\tfrac{K-1}{K}$ and $\eta$ contributing only a bounded additive offset.
For $\xi_1$, \cref{lem:xi1-refined} gives $\xi_1^{t+1} - \xi_1^t \ge 0$ while
$8\xi_1^t \le 2\xi_2^t + \log\big(\tfrac{C_d(1-\frac{1}{2N})}{2}\big)$. This
condition holds through an initial phase because $\xi_2^0 = \sigma$ is large and
$\xi_2^t$ increases. During that phase $\xi_1^t$ grows logarithmically, and the
upper bound of \cref{lem:xi1-upper} caps it at $O(\poly(N,d)\log t)$. Hence
$\xi_1^t = \Omega\big(\poly(N,d)\,\log t\big)$. The bound
$\xi_1^t \le \tfrac{7}{15}\xi_2^t$ is \cref{lem:ratio}.
\end{proof} 

\begin{proof}[Proof of \cref{thm:convergence}]
By \cref{lem:scaling} applied to the binary loss bound
\cite[Lemma~14]{li2024onelayer}, the multiclass loss obeys
\begin{equation*}
\bbE\!\left[\norm{\bell_{\bfW^t} - \bfu_{c_{i^*}}}_2^2\right]
\le \tfrac{K-1}{K}\, O\!\left(\frac{N^3 k_d^2}{\xi_1^t}\right)
+ \tfrac{K-1}{K}\, e^{2\xi_1^t - \xi_2^t}.
\end{equation*}
Substituting the growth rates of \cref{lem:growth}, the first term is
$O\big(\tfrac{K-1}{K}\cdot \poly(N,d)/\log t\big)$, and since
$\xi_1^t \le \tfrac{7}{15}\xi_2^t$ by \cref{lem:ratio} gives
$2\xi_1^t - \xi_2^t \le -\tfrac{1}{15}\xi_2^t = O(-\log t)$, the second term
decays as $O(t^{-1/15})$. Both vanish as $t \to \infty$, which is the bound of
\cref{thm:convergence}.
\end{proof}


\section{Distribution Shift and Argmax Classification}\label{app:shift}

We prove \cref{thm:shift} and \cref{cor:argmax}. Throughout,
$\bell \coloneqq \bell_{\bfW^{T}}(\bfx_{N+1})$ and
$q_j \coloneqq q_j(\bfx, \bfW^{T})$. By the convergence analysis, $\bfW^{T}$ is
diagonal with $x$-block $\xi_1^{T} \bfI_d$ and indicator entry $-\xi_2^{T}$, so the
attention weights depend on the inputs only through inner products:

\begin{equation}\label{eq:qT}
q_j = \frac{\exp(\xi_1^{T} \ip{\bfx_j}{\bfx_{N+1}})}
{\sum_{l=1}^N \exp(\xi_1^{T} \ip{\bfx_l}{\bfx_{N+1}}) + \exp(\xi_1^{T} - \xi_2^{T})}
\end{equation}
for $j \in [N]$.

\begin{proof}[Proof of \cref{thm:shift}]
Since the query contributes $\vzero_K$ to \eqref{eq:logits}, we have
$\bell = \sum_{j=1}^N q_j \bfu_{c_j}$ and $\sum_{j=1}^N q_j = 1 - q_{N+1}$.
Writing the target as
$\bfu_{c_{i^*}} = (1 - q_{N+1})\,\bfu_{c_{i^*}} + q_{N+1}\,\bfu_{c_{i^*}}$ and
grouping,
\begin{align}
\bell - \bfu_{c_{i^*}}
&= \sum_{j=1}^N q_j \big(\bfu_{c_j} - \bfu_{c_{i^*}}\big)
- q_{N+1}\,\bfu_{c_{i^*}} \nonumber\\
&= \sum_{\substack{j \in [N] \\ c_j \neq c_{i^*}}}
q_j \big(\bfu_{c_j} - \bfu_{c_{i^*}}\big) - q_{N+1}\,\bfu_{c_{i^*}},
\label{eq:errsplit}
\end{align}
where the terms with $c_j = c_{i^*}$ cancel because $\bfu_{c_j} = \bfu_{c_{i^*}}$.
Taking norms in \eqref{eq:errsplit} and using
$\norm{\bfu_{c_j} - \bfu_{c_{i^*}}}_2 = \sqrt{2}$ and
$\norm{\bfu_{c_{i^*}}}_2 = \sqrt{(K-1)/K} \le 1$,
\begin{equation}\label{eq:normbound}
\norm{\bell - \bfu_{c_{i^*}}}_2
\le \sqrt{2}\!\!\sum_{\substack{j \in [N] \\ c_j \neq c_{i^*}}}\!\! q_j
+ q_{N+1}.
\end{equation}
On the event $A_\delta$ of \eqref{eq:margin}, every $j$ with $c_j \neq c_{i^*}$
satisfies $\norm{\bfx_j - \bfx_{N+1}}_2^2 \ge
\norm{\bfx_{i^*} - \bfx_{N+1}}_2^2 + \delta$. Since
$\norm{\bfx_a - \bfx_{N+1}}_2^2 = 2 - 2\ip{\bfx_a}{\bfx_{N+1}}$ on the sphere,
this is equivalent to
$\ip{\bfx_j}{\bfx_{N+1}} \le \ip{\bfx_{i^*}}{\bfx_{N+1}} - \tfrac{\delta}{2}$.
Hence, from \eqref{eq:qT} and $q_{i^*} \le 1$,
\begin{align}
\sum_{\substack{j \in [N] \\ c_j \neq c_{i^*}}} q_j
&\le \sum_{\substack{j \in [N] \\ c_j \neq c_{i^*}}} \frac{q_j}{q_{i^*}}
\nonumber\\
&= \sum_{\substack{j \in [N] \\ c_j \neq c_{i^*}}}
\exp\!\big(\xi_1^{T}(\ip{\bfx_j}{\bfx_{N+1}}
- \ip{\bfx_{i^*}}{\bfx_{N+1}})\big) \nonumber\\
&\le N \exp\!\big(-\tfrac{1}{2}\xi_1^{T} \delta\big).
\label{eq:wrongmass}
\end{align}
For the self-attention weight,
$q_{N+1}/q_{i^*} = \exp(\xi_1^{T}(1 - \ip{\bfx_{i^*}}{\bfx_{N+1}}) - \xi_2^{T}) \le
\exp(2\xi_1^{T} - \xi_2^{T})$, and by the convergence analysis
$\xi_1^{T} \le \tfrac{7}{15}\xi_2^{T}$, so $2\xi_1^{T} - \xi_2^{T} \le -\tfrac{1}{15}\xi_2^{T}$
and
\begin{equation}\label{eq:selfmass}
q_{N+1} \le \exp\!\big(-\tfrac{1}{15}\xi_2^{T}\big).
\end{equation}
Combining \eqref{eq:normbound}, \eqref{eq:wrongmass}, and \eqref{eq:selfmass},
\begin{equation}\label{eq:pointwise}
\norm{\bell - \bfu_{c_{i^*}}}_2
\le \sqrt{2}\,N \exp\!\big(-\tfrac{1}{2}\xi_1^{T} \delta\big)
+ \exp\!\big(-\tfrac{1}{15}\xi_2^{T}\big)
\end{equation}
uniformly on $A_\delta$. By \cref{lem:growth},
$\xi_1^{T}, \xi_2^{T} = \Omega(\poly(N,d)\log T)$, so both terms are
$O\big(N\,T^{-\poly(N,d)\delta}\big)$.

It remains to take expectations. Splitting on $A_\delta$ and its complement, and
using $\norm{\bell - \bfu_{c_{i^*}}}_2^2 \le (\sqrt2 + 1)^2 \cdot \tfrac{K-1}{K}$
as a hard uniform bound on $A_\delta^c$ (every $q_j \le 1$ and
$\norm{\bfu_c}_2^2 = \tfrac{K-1}{K}$),
\begin{align*}
&\bbE_{P^{\mathrm{test}}}\big[\norm{\bell - \bfu_{c_{i^*}}}_2^2\big]\\
&\ = \bbE_{P^{\mathrm{test}}}\big[\norm{\bell - \bfu_{c_{i^*}}}_2^2
\,\ind\{A_\delta\}\big]
+ \bbE_{P^{\mathrm{test}}}\big[\norm{\bell - \bfu_{c_{i^*}}}_2^2
\,\ind\{A_\delta^c\}\big]\\
&\ \le O\!\big(N^2 T^{-\poly(N,d)\delta}\big)
+ O\!\big(\tfrac{K-1}{K}\big)\, P^{\mathrm{test}}(A_\delta^c).
\end{align*}
Since $\tfrac{K-1}{K} \ge \tfrac12$ for every $K \ge 2$, the first term also
carries the factor $\tfrac{K-1}{K}$ after absorbing the $\sqrt2$ constants and a
further factor of two. Taking the infimum over
$\delta > 0$ gives the claim.
\end{proof}

\begin{lemma}[Argmax margin]\label{lem:argmax-margin}
For any $\bell \in \bbR^K$ and any class $c \in [K]$, if
$\norm{\bell - \bfu_c}_2 < \tfrac{1}{2}$ then $\argmax_{k\in[K]} [\bell]_k = c$.
\end{lemma}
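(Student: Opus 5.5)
The approach is a direct coordinatewise comparison. We show that coordinate $c$ of $\bell$ strictly exceeds every other coordinate $k \neq c$. The key structural fact is that the target code $\bfu_c = \bfe_c - \tfrac1K\vones_K$ has a gap of exactly $1$ between its $c$-th entry ($1 - \tfrac1K$) and every other entry ($-\tfrac1K$). A perturbation of $\ell_2$-norm less than $\tfrac12$ cannot close this gap.

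Fix $k \neq c$ and write $\bell = \bfu_c + \bm{\epsilon}$ with $\norm{\bm{\epsilon}}_2 < \tfrac12$. Then
\[
[\bell]_c - [\bell]_k = \big([\bfu_c]_c - [\bfu_c]_k\big) + (\epsilon_c - \epsilon_k) = 1 + \ip{\bm{\epsilon}}{\bfe_c - \bfe_k}.
\]
By Cauchy--Schwarz, $|\ip{\bm{\epsilon}}{\bfe_c - \bfe_k}| \le \sqrt{2}\,\norm{\bm{\epsilon}}_2 < \tfrac{\sqrt2}{2} < 1$, so $[\bell]_c - [\bell]_k > 0$.

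A cruder route also works: $|\epsilon_c| + |\epsilon_k| \le 2\norm{\bm{\epsilon}}_\infty \le 2\norm{\bm{\epsilon}}_2 < 1$, which gives the same strict inequality.

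Since the inequality holds for every $k \neq c$, the maximizer is unique and equals $c$. This also settles any tie-breaking ambiguity in the definition \eqref{eq:argmax}.

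There is no real obstacle. The only care needed is to keep the inequality strict so that the argmax is unique, and to note that the threshold $\tfrac12$ has slack: $\tfrac{1}{\sqrt2}$ would also suffice. That slack is harmless for the later use in \cref{cor:argmax}. There, one combines this lemma with the pointwise bound \eqref{eq:pointwise}: on $A_\delta$ with $T$ large the error is below $\tfrac12$, so the misclassification event is contained in $A_\delta^c$ together with the event where the bound exceeds $\tfrac12$. The latter is handled by Markov's inequality applied to \cref{thm:shift}.
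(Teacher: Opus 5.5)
Your proof is correct and is essentially the paper's argument. The paper uses exactly your ``cruder route'': it bounds each coordinate deviation by $\norm{\cdot}_\infty \le \norm{\cdot}_2 < \tfrac12$ and subtracts across the unit gap between $[\bfu_c]_c$ and $[\bfu_c]_k$. Your main Cauchy--Schwarz step against $\bfe_c - \bfe_k$ is a small sharpening that the paper does not need: it shows the tight threshold is $\tfrac{1}{\sqrt2}$, which is half the inter-code distance $\norm{\bfu_c - \bfu_k}_2 = \sqrt2$.
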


\begin{proof}
Because $\norm{\cdot}_\infty \le \norm{\cdot}_2$, the hypothesis gives
$\big|[\bell]_k - [\bfu_c]_k\big| < \tfrac12$ for every coordinate $k$. The code
$\bfu_c$ has entry $[\bfu_c]_c = \tfrac{K-1}{K}$ and $[\bfu_c]_k = -\tfrac1K$ for
$k \neq c$, so
\begin{equation*}
[\bell]_c > \tfrac{K-1}{K} - \tfrac12, \qquad
[\bell]_k < -\tfrac1K + \tfrac12 \quad (k \neq c).
\end{equation*}
Subtracting, for every $k \neq c$,
\begin{align*}
[\bell]_c - [\bell]_k
&> \big(\tfrac{K-1}{K} - \tfrac12\big) - \big(-\tfrac1K + \tfrac12\big)\\
&= \tfrac{K-1}{K} + \tfrac1K - 1 = 0.
\end{align*}
Thus $[\bell]_c$ strictly exceeds every other coordinate, so the argmax is $c$.
\end{proof}

\begin{proof}[Proof of \cref{cor:argmax}]
Taking the contrapositive of \cref{lem:argmax-margin} with $c = c_{i^*}$,
\begin{equation*}
\big\{\hat{c}_{\bfW^{T}}(\bfx_{N+1}) \neq c_{i^*}\big\}
\subseteq \big\{\norm{\bell - \bfu_{c_{i^*}}}_2 \ge \tfrac12\big\}.
\end{equation*}
By Markov's inequality,
\begin{align*}
&P^{\mathrm{test}}\big(\hat{c}_{\bfW^{T}}(\bfx_{N+1}) \neq c_{i^*}\big)\\
&\quad\le P^{\mathrm{test}}\big(\norm{\bell - \bfu_{c_{i^*}}}_2^2
\ge \tfrac14\big)\\
&\quad\le 4\,\bbE_{P^{\mathrm{test}}}\big[\norm{\bell
- \bfu_{c_{i^*}}}_2^2\big].
\end{align*}
Applying \cref{thm:shift} and noting $\tfrac{K-1}{K} \le 1$ bounds the right-hand
side by $O\big(\inf_{\delta>0}\{N^2 T^{-\poly(N,d)\delta} +
P^{\mathrm{test}}(A_\delta^c)\}\big)$, which is the first claim.

For the second claim, suppose $P^{\mathrm{test}}(A_{\delta^*}) = 1$. Then
\eqref{eq:pointwise} holds almost surely, so
$\norm{\bell - \bfu_{c_{i^*}}}_2 \le \sqrt2\,N\exp(-\tfrac12\xi_1^{T}\delta^*) +
\exp(-\tfrac1{15}\xi_2^{T}) = O\big(N T^{-\poly(N,d)\delta^*}\big)$ almost surely.
By \cref{lem:argmax-margin}, the argmax is correct as soon as this bound falls
below $\tfrac12$, i.e.\ once
$\log T \ge O\big(\log(KN)/(\poly(N,d)\,\delta^*)\big)$.
\end{proof}

\section{Auxiliary Sphere-Geometry Results}\label{app:aux}

The convergence proof has several facts about the uniform distribution on
$\calS^{d-1}$. None of them involve the labels, so they are identical to their
counterparts in the binary analysis. We state them here for completeness and
refer to \cite{li2024onelayer} for the proofs, which are unchanged.

\begin{lemma}[Rotational invariance: {\cite[Lemmas~6--7]{li2024onelayer}}]
\label{lem:rot-inv}
If $\{\bfx_i\}_{i\in[N+1]}$ are i.i.d.\ uniform on $\calS^{d-1}$, then their joint
law is invariant under any orthogonal matrix $\bfU$. Consequently, if $\bfW$ is
diagonal with $x$-block $\xi_1 \bfI_d$, the gradient $\nabla_{\bfW_{11}} L(\bfW)$
is a scalar multiple of $\bfI_d$.
\end{lemma}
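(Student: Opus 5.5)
The statement has two parts. The first is that the law of $\{\bfx_i\}$ is invariant under every orthogonal $\bfU$. The second is that the $(1,1)$ gradient block is a multiple of $\bfI_d$ when $\bfW$ is in the diagonal family.

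\textbf{Invariance of the law.} Each $\bfx_i$ is uniform on $\calS^{d-1}$. That measure is the normalized surface measure, and it is preserved by every orthogonal map, since orthogonal maps are isometries carrying the sphere onto itself. Because the $\bfx_i$ are independent, the joint law is a product measure. It is therefore also preserved by the diagonal action $\bfx_{1:N+1}\mapsto(\bfU\bfx_1,\dots,\bfU\bfx_{N+1})$. The labels are independent of the inputs, so the full instance law is invariant under this action applied to the inputs with the labels held fixed.

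\textbf{Equivariance of the gradient integrand.} I would use the form from \cref{lem:closed-form-grad}: after conditioning on the inputs and discarding label terms, the gradient is
\[
\nabla_{\bfW_{11}}L=\bbE[\Psi(\bfx_{1:N+1})],
\qquad
\Psi=\sum_i g_i(\ip{\bfx_i}{\bfx_{N+1}})\,\bfx_i\bfx_{N+1}^\top .
\]
This includes the $i^*$ term. The index $i^*$ is determined by the inner products, so it is unchanged when all inputs are rotated together. The $q_j$ in \eqref{eq:attn-scalar} also depend only on inner products. Hence
\[
\Psi(\bfU\bfx_{1:N+1})=\bfU\,\Psi(\bfx_{1:N+1})\,\bfU^\top .
\]
Invariance of the law then gives $\bfM=\bfU\bfM\bfU^\top$ for every orthogonal $\bfU$, where $\bfM$ is the gradient.

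\textbf{Matrices commuting with all orthogonal maps.} Such an $\bfM$ commutes with every orthogonal matrix, so it remains to show it is scalar.
\begin{itemize}
\item Take $\bfU$ to be a coordinate sign flip $\diag(\pm1)$. Conjugating by it negates the off-diagonal entries in the flipped row and column, which forces every off-diagonal entry of $\bfM$ to vanish.
\item Take $\bfU$ to be a coordinate permutation. Conjugating by it permutes the diagonal entries, which forces all diagonal entries to be equal.
\end{itemize}
Hence $\bfM=a\bfI_d$.

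\textbf{Main obstacle.} The delicate step is the equivariance of the integrand before the label average, where the residual contains the label codes. I would handle this as follows. The residual $\bell-\bfu_{c_{i^*}}$ lives in label space and does not rotate. The only $\bfx$-dependent quantities are the $q_j$, which are rotation invariant, and the outer-product factors $\bfx_i\bfx_{N+1}^\top$ produced by the chain rule through the score $\xi_1\ip{\bfx_j}{\bfx_{N+1}}$. The only subtlety left is the tie set for $i^*$, and it has measure zero.
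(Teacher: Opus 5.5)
Your proposal is correct and follows the same route the paper invokes (it gives no proof of its own and defers to \cite[Lemmas~6--7]{li2024onelayer}): invariance of the product law under a common rotation gives $\mathbf{M} = \bfU\mathbf{M}\bfU^\top$ for every orthogonal $\bfU$, and your sign-flip and permutation argument then forces $\mathbf{M} = a\bfI_d$. Your version is slightly more robust than the one-variable form stated in \cref{lem:closed-form-grad}, since it uses only that the coefficients depend on the inputs through inner products, and it also covers the query's own $\bfx_{N+1}\bfx_{N+1}^\top$ chain-rule term; the tie-set remark is unnecessary, because a common rotation preserves all inner products exactly and so $i^*$ is exactly invariant.
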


\begin{lemma}[Inner-product density: {\cite[Lemma~17]{li2024onelayer}}]
\label{lem:ip-density}
Let $\tau = \ip{\bfx}{\bfx'}$ for a fixed unit vector $\bfx$ and a uniform
$\bfx' \in \calS^{d-1}$. Then $\tau$ has density
$f_\tau(t) = k_d \,(1 - t^2)^{\frac{d-3}{2}}$ on $[-1,1]$, where
$k_d \coloneqq \dfrac{2\,\Gamma(\tfrac{d}{2})}{\sqrt{\pi}\,\Gamma(\tfrac{d-1}{2})}$.
\end{lemma}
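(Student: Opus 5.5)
We compute the law of $\tau = \ip{\bfx}{\bfx'}$ directly. By rotational invariance (\cref{lem:rot-inv}), the law of $\tau$ does not depend on the fixed unit vector $\bfx$. We may therefore take $\bfx = \bfe_1$, so that $\tau$ is the first coordinate of a uniform point on $\calS^{d-1}$.

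We represent the uniform law through a standard Gaussian. Let $\mathbf{g} \sim \mathcal{N}(\vzero, \bfI_d)$; then $\mathbf{g}/\norm{\mathbf{g}}_2$ is uniform on $\calS^{d-1}$. This gives
\[
\tau = \frac{g_1}{\sqrt{g_1^2 + R}}, \qquad R = \sum_{k\ge2} g_k^2 \sim \chi^2_{d-1},
\]
with $R$ independent of $g_1$.

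The key step is the square. The variable $\tau^2 = g_1^2/(g_1^2+R)$ is a ratio of independent $\mathrm{Gamma}(\tfrac12)$ and $\mathrm{Gamma}(\tfrac{d-1}{2})$ variables, so $\tau^2 \sim \mathrm{Beta}(\tfrac12, \tfrac{d-1}{2})$. Because $g_1 \mapsto -g_1$ preserves the law and flips the sign of $\tau$, the variable $\tau$ is symmetric. We then change variables $s = t^2$ on each half-line. The Beta density $s^{-1/2}(1-s)^{(d-3)/2}/B(\tfrac12,\tfrac{d-1}{2})$ transforms, through the Jacobian $2|t|$ split evenly between the two signs, into
\[
f_\tau(t) = \frac{(1-t^2)^{(d-3)/2}}{B(\tfrac12,\tfrac{d-1}{2})}, \qquad t\in[-1,1].
\]

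As a cross-check by a second route, the slice $\{x_1 = t\}$ of the sphere is a $(d-2)$-sphere of radius $\sqrt{1-t^2}$, contributing $(1-t^2)^{(d-2)/2}$. The arc-length factor $1/\sqrt{1-t^2}$ then gives the same exponent $(d-3)/2$.

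The main obstacle is the constant, because $k_d$ as printed must match $1/B(\tfrac12,\tfrac{d-1}{2})$. Using $\Gamma(\tfrac12) = \sqrt\pi$,
\[
\frac{1}{B(\tfrac12,\tfrac{d-1}{2})} = \frac{\Gamma(\tfrac d2)}{\sqrt\pi\,\Gamma(\tfrac{d-1}{2})}.
\]
This is half of the printed $k_d$. The factor $2$ in the statement would instead be correct for the density of $|\tau|$ on $[0,1]$, or else it is a convention inherited from \cite[Lemma~17]{li2024onelayer}. I would check normalization directly via $\int_{-1}^1 (1-t^2)^{(d-3)/2}\,dt = B(\tfrac12,\tfrac{d-1}{2})$, and then state the normalizing constant that actually integrates to one. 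I would flag the discrepancy, noting that only the shape $(1-t^2)^{(d-3)/2}$ and the $\poly(d)$ size of the constant are used downstream.
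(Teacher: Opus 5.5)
Your derivation is correct, and it takes a different route from the paper. The paper gives no argument for this lemma and simply defers to \cite[Lemma~17]{li2024onelayer}. You derive the law directly:
\begin{itemize}
\item reduce to $\bfx = \bfe_1$ by rotational invariance;
\item realize the uniform point as $\mathbf{g}/\norm{\mathbf{g}}_2$;
\item identify $\tau^2 \sim \mathrm{Beta}(\tfrac12,\tfrac{d-1}{2})$ as a ratio of independent Gamma variables;
\item unfold the square using the sign symmetry of $\tau$.
\end{itemize}
The slicing argument is a valid independent check of the exponent $(d-3)/2$.

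The citation buys brevity. Your self-contained computation buys something the citation hides, namely the normalizing constant. You are right that the printed $k_d$ is twice the correct one. At $d=3$ the printed formula gives $k_3 = 1$, hence the constant density $1$ on $[-1,1]$. The hat-box theorem gives the uniform density $\tfrac12$ instead. In general $\int_{-1}^{1} k_d(1-t^2)^{(d-3)/2}\,dt = k_d\,B(\tfrac12,\tfrac{d-1}{2}) = 2$.

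So the lemma as literally printed is false, and no proof can reproduce its constant. What you establish is the corrected statement, with normalizer $\Gamma(\tfrac d2)/(\sqrt{\pi}\,\Gamma(\tfrac{d-1}{2})) = k_d/2$. Your reading that $k_d(1-t^2)^{(d-3)/2}$ is the density of $|\tau|$ on $[0,1]$ is the natural interpretation of the printed version. Stating the constant that actually integrates to one, rather than forcing agreement, is the right decision.

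Your remark that the discrepancy is harmless downstream is also sound. The printed expression still upper-bounds the true density. Moreover, $k_d = \Theta(\sqrt d)$ holds for either constant. The places where $k_d$ is used, in \cref{lem:nn-conc,lem:gap}, therefore change only by constant factors.
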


\begin{lemma}[Expected nearest-neighbor alignment: {\cite[Lemma~18]{li2024onelayer}}]
\label{lem:nn-align}
If $N \ge O(\sqrt{d}\log d)$, then
$\bbE[\ip{\bfx_{i^*}}{\bfx_{N+1}}] \ge \dfrac{2}{(N+1)^2}$, where
$\bfx_{i^*} = \argmax_{i\in[N]} \ip{\bfx_i}{\bfx_{N+1}}$.
\end{lemma}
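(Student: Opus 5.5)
The statement to prove is \cref{lem:nn-align}: if $N \ge O(\sqrt d\log d)$ then $\bbE[\ip{\bfx_{i^*}}{\bfx_{N+1}}] \ge 2/(N+1)^2$. I would condition on the query, reduce to order statistics of i.i.d.\ inner products, and then lower bound the mean of the maximum via a tail estimate.

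\emph{Reduction.} By rotational invariance (\cref{lem:rot-inv}), condition on $\bfx_{N+1}$ and fix it. The variables $\tau_i = \ip{\bfx_i}{\bfx_{N+1}}$, $i\in[N]$, are then i.i.d.\ with the density $f_\tau$ of \cref{lem:ip-density}. Since on the sphere the nearest neighbor maximizes the inner product, $\ip{\bfx_{i^*}}{\bfx_{N+1}} = M_N := \max_i \tau_i$. The target becomes the one-dimensional inequality $\bbE[M_N] \ge 2/(N+1)^2$.

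\emph{Write the mean as a tail integral.} Using the symmetric density and writing $F$ for the CDF of $\tau$,
\[
\bbE[M_N] = \int_0^1 \big(1 - F(t)^N\big)\,dt - \int_{-1}^0 F(t)^N\,dt .
\]
The negative part is at most $\int_{-1}^0 F(t)^N dt \le F(0)^N = 2^{-N}$, which is exponentially small.

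\emph{Bound the positive part.} Choose a threshold $s\in(0,1)$ and use $\int_0^1 (1-F^N) \ge s\,(1 - F(s)^N)$. Then lower bound the tail mass $1-F(s) = k_d\int_s^1 (1-u^2)^{(d-3)/2}du$. A convenient choice is $s = c/\sqrt d$: the bulk of $\tau$ has scale $1/\sqrt d$, and $k_d = \Theta(\sqrt d)$, so $1-F(c/\sqrt d)$ is a constant $p_c>0$ (Gaussian-like tail). Then $1 - F(s)^N \ge 1-(1-p_c)^N \ge 1/2$ for modest $N$, giving $\bbE[M_N] \ge c/(2\sqrt d) - 2^{-N}$.

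\emph{Conclude.} It remains to check $c/(2\sqrt d) - 2^{-N} \ge 2/(N+1)^2$. This is where $N \ge O(\sqrt d\log d)$ enters: then $2/(N+1)^2 \lesssim 1/(d\log^2 d) \ll 1/\sqrt d$, and $2^{-N}$ is negligible. In fact the stated bound is weak, so the hypothesis on $N$ is used mainly to dominate the $2^{-N}$ term and to give the polynomial comparison slack.

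\emph{Main obstacle.} The step I expect to be hardest is making the constant-probability tail bound $1-F(c/\sqrt d)\ge p_c$ explicit and uniform in $d$, including small $d$ where $(d-3)/2\le 0$. I would first try $(1-u^2)^{(d-3)/2} \ge e^{-u^2 d}$ on $[0,1/\sqrt d]$, together with Gautschi's inequality $k_d \ge \sqrt{(d-2)/(2\pi)}$. Integrating over $[1/(2\sqrt d), 1/\sqrt d]$ then gives $p \ge k_d e^{-1}/(2\sqrt d)$, which is bounded below by an absolute constant for $d\ge 3$. The cases $d\le 2$ I would handle directly, since the density is then explicit.
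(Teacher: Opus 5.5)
Your proof is correct. The paper gives no argument of its own for this lemma: it is imported verbatim from \cite[Lemma~18]{li2024onelayer}, and the appendix explicitly defers all sphere-geometry proofs there. Your proposal is therefore a self-contained replacement rather than a reconstruction. You condition on the query, write $\bbE[M_N]=\int_0^1(1-F^N)\,dt-\int_{-1}^0F^N\,dt$, bound the negative part by $F(0)^N=2^{-N}$, and use a constant-probability tail at scale $s=1/(2\sqrt d)$. This route is elementary and needs only the density of \cref{lem:ip-density} plus Gautschi's inequality. It actually gives the much stronger $\bbE[M_N]\ge\tfrac{1}{4\sqrt d}-2^{-N}$ once $N$ exceeds an absolute constant. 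Hence the stated $2/(N+1)^2$ already follows under $N\ge C\max\{1,d^{1/4}\}$, which confirms your remark that the $\sqrt d\log d$ hypothesis is much stronger than needed. The citation buys the paper only economy and consistency with the constants it inherits from \cite{li2024onelayer}.

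Two points to make explicit when writing it up. First, the step ``$1-(1-p_0)^N\ge\tfrac12$ for modest $N$'' requires $N\ge\ln 2/p_0$. With your constant $p_0\ge 1/(2e\sqrt{6\pi})$ (valid for $d\ge3$), that is roughly $N\ge 17$. This has to be absorbed into the hidden constant of $N\ge O(\sqrt d\log d)$. For tiny $d$ (e.g.\ $d=1$, where $\sqrt d\log d=0$) the hypothesis must be read as $N\ge C\max\{1,\sqrt d\log d\}$, since the claim is false at $N=1$, where $\bbE[M_1]=0$.

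Second, work with the correctly normalized density. The $k_d$ printed in \cref{lem:ip-density} is twice the true normalizer $\Gamma(\tfrac d2)/(\sqrt\pi\,\Gamma(\tfrac{d-1}{2}))$; at $d=3$ it gives the constant $1$ on $[-1,1]$. Your identity $F(0)=\tfrac12$ needs the true normalizer. Your Gautschi bound $k_d\ge\sqrt{(d-2)/(2\pi)}$ is exactly the bound for the true normalizer, so nothing else changes.
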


\begin{lemma}[Concentration of the nearest-neighbor alignment:
{\cite[Lemma~19]{li2024onelayer}}]\label{lem:nn-conc}
With $\bfx_{i^*}$ as above and $k_d$ as in \cref{lem:ip-density},
\begin{equation*}
\bbP\!\left(\ip{\bfx_{i^*}}{\bfx_{N+1}}
\le 1 - (2N k_d)^{-2/(d-3)}\right) \ge \tfrac{1}{e}.
\end{equation*}
Equivalently, with the constant $a_{N,d} = (2N\sqrt{d})^{-2/(d-3)}$ used in
\cref{lem:xi1-upper,lem:xi2-lower}, the gap satisfies
$1 - \ip{\bfx_{i^*}}{\bfx_{N+1}} \ge a_{N,d}$ with probability at least
$\tfrac1e$; that is, $a_{N,d}$ lower bounds the separation of the nearest
neighbor from the query with constant probability. Note $k_d = \Theta(\sqrt{d})$,
so the two forms agree up to constants.
\end{lemma}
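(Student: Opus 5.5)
The plan is to derive both forms from the exact distribution of the nearest-neighbor alignment, using the inner-product density of \cref{lem:ip-density}. Fix $\bfx_{N+1}$; by rotational invariance (\cref{lem:rot-inv}) the conditional law of $\tau_i = \ip{\bfx_i}{\bfx_{N+1}}$ does not depend on the query. The $\tau_i$ are i.i.d.\ with density $f_\tau(t)=k_d(1-t^2)^{(d-3)/2}$, and $\ip{\bfx_{i^*}}{\bfx_{N+1}}=\max_i \tau_i$, because on the sphere the minimizer of distance is the maximizer of inner product. This gives
\[
\bbP\big(\max_i\tau_i \le s\big) = \big(1-\bar F(s)\big)^N, \qquad \bar F(s) \coloneqq \int_s^1 f_\tau(t)\,dt .
\]
So it suffices to choose $s = 1-(2Nk_d)^{-2/(d-3)}$ and show $\bar F(s)\le 1/N$. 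Then $(1-\bar F(s))^N \ge (1-1/N)^N$, and this is at least $1/e$ in the limiting sense. To get $\ge 1/e$ honestly for finite $N$, I would instead aim for $\bar F(s)\le \tfrac{1}{N+1}$, or use $\bar F(s) \le \tfrac{1}{2N}$ together with $(1-\tfrac{1}{2N})^N \ge \tfrac12 > \tfrac1e$. The constant $2N$ in the threshold suggests the latter is what is intended.

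The key estimate is the tail bound. For $t\in[s,1]$ use $1-t^2=(1-t)(1+t)\le 2(1-t)$, so
\[
\bar F(s)\le k_d\, 2^{(d-3)/2}\int_s^1 (1-t)^{(d-3)/2}\,dt = k_d\,2^{(d-3)/2}\,\frac{(1-s)^{(d-1)/2}}{(d-1)/2}.
\]
Substituting $1-s=\epsilon \coloneqq (2Nk_d)^{-2/(d-3)}$ gives $\epsilon^{(d-3)/2} = (2Nk_d)^{-1}$. The bound then becomes
\[
\frac{2^{(d-3)/2}\,\epsilon}{N(d-1)}.
\]
This is the step I expect to be the main obstacle, since the factor $2^{(d-3)/2}$ is not obviously small. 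I would first try to absorb it by sharpening $1+t\le 2$. Since $\epsilon$ is small, $1+t \le 2$ is tight, so instead I would exploit $\epsilon \to 0$ like $N^{-2/(d-3)}$. If that is insufficient, I would rescale the threshold constant by absorbing $2$ into the base, i.e.\ prove the statement with $(1-t^2)\le 2(1-t)$ folded into $k_d$. This amounts to showing that $2^{(d-3)/2}\epsilon/(d-1)\le \tfrac12$ in the regime $N\ge O(\sqrt d\log d)$ of \cref{thm:convergence}, where $\epsilon \le 4^{-1}$-type smallness can be checked directly. Failing that, I would state the threshold with the $d$-dependent constant adjusted, which the ``up to constants'' phrasing permits.

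For the equivalent form I would use the standard Gamma-ratio asymptotics $\Gamma(\tfrac d2)/\Gamma(\tfrac{d-1}{2})=\Theta(\sqrt d)$, which give $k_d=\Theta(\sqrt d)$. Hence $(2Nk_d)^{-2/(d-3)}$ and $a_{N,d}=(2N\sqrt d)^{-2/(d-3)}$ differ by a factor $\Theta(1)^{2/(d-3)}=\Theta(1)$. Then $1-\ip{\bfx_{i^*}}{\bfx_{N+1}}\ge a_{N,d}$ holds on the same event, up to the constant absorbed into $a_{N,d}$ (taking $\sqrt d \ge k_d/c$ for the appropriate $c$). The probability stays at least $1/e$ after averaging over $\bfx_{N+1}$.
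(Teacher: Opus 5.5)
The paper gives no argument of its own here; it imports the statement verbatim from \citet{li2024onelayer}. Your reduction $\bbP(\max_i\tau_i\le s)=(1-\bar F(s))^N$ and your estimate $\bar F(1-\epsilon)\le 2^{(d-3)/2}\epsilon/(N(d-1))$ are both correct. The proposal nevertheless has a genuine gap at the step you flagged: you never show $\bar F(s)\le\tfrac{1}{N+1}$ at the stated threshold $s=1-\epsilon$, $\epsilon=(2Nk_d)^{-2/(d-3)}$. The remedies you list also rest on a false premise. In the paper's regime $N\ge O(\sqrt d\log d)$ with $N$ polynomial in $d$, $\epsilon=\exp\bigl(-2\log(2Nk_d)/(d-3)\bigr)$ tends to $1$, not $0$. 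For instance, $d=50$, $N=28\approx\sqrt d\log d$ gives $\epsilon\approx 0.78$, so there is no ``$\epsilon\le\tfrac14$'' check to make.

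Nor can any sharpening of $1+t\le 2$ close the step, because the printed threshold is wrong for large $d$. Since $s=1-\epsilon\le 2\log(2Nk_d)/(d-3)$ and $\tau_i$ has a symmetric density bounded by $k_d=O(\sqrt d)$, you get $\bbP(\tau_i\le s)\le\tfrac12+k_d s=\tfrac12+O(\log(Nd)/\sqrt d)$. Hence $\bbP(\max_i\tau_i\le s)\le(\tfrac12+o(1))^N<\tfrac1e$ for $N\ge 2$ once $\log N\ll\sqrt d$. Numerically, $d=50$, $N=28$ already gives a probability of about $0.16$, and the $a_{N,d}$ form fails in the same way.

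So your last fallback, moving the threshold, is the only sound option, and it costs only a constant. Folding $2^{(d-3)/2}$ into the base merely halves $\epsilon$. With $s'=1-\epsilon/2$, your own computation gives $\bar F(s')\le k_d 2^{(d-3)/2}\tfrac{2}{d-1}(\epsilon/2)^{(d-1)/2}=\tfrac{\epsilon}{2N(d-1)}$, so $\bbP(\max_i\tau_i\le s')\ge 1-\tfrac{1}{2(d-1)}\ge\tfrac1e$. Equivalently, define $s$ by $1-s^2=\epsilon$. Then $\bar F(s)\le k_d(1-s)(1-s^2)^{(d-3)/2}\le\tfrac{1}{2N}$, which is the $\tfrac{1}{2N}$ you guessed was intended, and you get probability at least $\tfrac12$ with gap $1-\sqrt{1-\epsilon}\ge\epsilon/2$.

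Since $k_d<\sqrt d$, this proves $1-\ip{\bfx_{i^*}}{\bfx_{N+1}}\ge a_{N,d}/2$ with probability at least $\tfrac1e$. That is a corrected lemma, not the stated one, and the factor $\tfrac12$ must then be carried into \cref{lem:xi1-upper,lem:xi2-lower,lem:ratio}.
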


\begin{lemma}[Order-statistic gap: {\cite[Lemma~20]{li2024onelayer}}]
\label{lem:gap}
Let $\ip{\bfx_{i^*}}{\bfx_{N+1}}$ and $\ip{\bfx_{(2)}}{\bfx_{N+1}}$ be the largest
and second-largest of $\{\ip{\bfx_i}{\bfx_{N+1}}\}_{i\in[N]}$. Then, for $\xi > 0$,
\begin{align*}
&\bbE\!\left[\exp\!\big(\xi(\ip{\bfx_{(2)}}{\bfx_{N+1}}
- \ip{\bfx_{i^*}}{\bfx_{N+1}})\big)\right]\\
&\qquad = O\!\left(\frac{N^2 k_d^2}{\xi}\right)
\quad\text{and}\quad
= \Omega\!\left(\frac{1}{\xi}\right).
\end{align*}
\end{lemma}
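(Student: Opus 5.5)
The last statement is \cref{lem:gap}, the order-statistic gap cited from \cite[Lemma~20]{li2024onelayer}. I would prove it directly by conditioning on the query and using rotational invariance (\cref{lem:rot-inv}). This reduces the problem to $N$ i.i.d.\ scalars $\tau_i = \ip{\bfx_i}{\bfx_{N+1}}$ with density $f_\tau(t)=k_d(1-t^2)^{(d-3)/2}$ on $[-1,1]$ (\cref{lem:ip-density}). With $F$ the CDF, the top two order statistics $(\tau_{(1)},\tau_{(2)})=(a,b)$ with $b<a$ have joint density
\begin{equation*}
N(N-1)F(b)^{N-2}f(b)f(a).
\end{equation*}
So the target is
\begin{equation*}
E(\xi)=N(N-1)\int_{-1}^{1}\!\int_{-1}^{a} F(b)^{N-2} f(b) f(a)\, e^{-\xi(a-b)}\,db\,da .
\end{equation*}

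For the upper bound I would use $F\le 1$ and $f\le k_d$ (the maximum of $f_\tau$, attained at $t=0$ when $d\ge 3$).

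\begin{itemize}
\item Bound the inner integral by $k_d\int_{-\infty}^{a}e^{-\xi(a-b)}\,db = k_d/\xi$.
\item Then $\int f(a)\,da=1$ would give $E(\xi)\le N(N-1)k_d/\xi$. That is already $O(N^2k_d^2/\xi)$, since $k_d\ge$ a positive constant.
\item If the second factor of $k_d$ is needed to match the stated form, bound $f(a)\le k_d$ and integrate over $a\in[-1,1]$. This costs a constant and yields $O(N^2k_d^2/\xi)$.
\end{itemize}

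The lower bound is the main obstacle, because $e^{-\xi(a-b)}$ is only non-negligible when the gap is at most $1/\xi$. The density of the gap near zero must be bounded below uniformly in $\xi$. My plan is to restrict to a fixed window $a,b\in I=[-\tfrac12,\tfrac12]$ with $0\le a-b\le 1/\xi$. On $I$ we have $f\ge c_d>0$.

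\begin{itemize}
\item Drop the constraint that all other samples lie below $b$ only in the favourable direction. Keep $F(b)^{N-2}\ge F(-\tfrac12)^{N-2}$, a constant independent of $\xi$ (though depending on $N,d$).
\item Then the integrand is at least $e^{-1}c_d^2F(-\tfrac12)^{N-2}$ over a region of area of order $1/\xi$ for $\xi\ge1$. This gives $E(\xi)\ge c/\xi$.
\item For $\xi<1$ the bound is immediate: $e^{-\xi(a-b)}\ge e^{-2}$, so $E(\xi)\ge e^{-2}\ge e^{-2}\xi\cdot(1/\xi)$.
\end{itemize}

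Since $\Omega(\cdot)$ is allowed to hide dependence on $N,d$, this suffices. If the constant must be uniform in $N$, I would instead center the window at the median of $\tau_{(1)}$, where $F^{N-2}$ is bounded below by a constant. That step requires more care and is where I expect the real difficulty.
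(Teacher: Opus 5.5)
The paper gives no proof of this lemma. It states it ``for completeness'' and defers to \cite[Lemma~20]{li2024onelayer}, so your argument is a self-contained replacement for the citation rather than a reconstruction of the paper's proof. Conditioning on $\bfx_{N+1}$ and working with the joint density $N(N-1)F(b)^{N-2}f(b)f(a)$ of the top two order statistics is sound.

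Your upper bound is correct for every $\xi>0$, and you can sharpen it. Since $F(b)^{N-2}\le F(a)^{N-2}$ for $b\le a$, keeping this factor bounds the inner integral by $F(a)^{N-2}k_d/\xi$. The outer integral then gives $N(N-1)\int F(a)^{N-2}f(a)\,da = N$, so $E(\xi)\le Nk_d/\xi$, which is better than the stated $O(N^2k_d^2/\xi)$.

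What your direct route buys over the citation is that the hypotheses become visible. You need $d\ge 3$ for $f$ to be bounded, which the paper's regime $d>3$ satisfies. Incidentally, with the paper's $k_d$ the displayed $f_\tau$ integrates to $2$, so the true density is $\tfrac{k_d}{2}(1-t^2)^{(d-3)/2}$. Both of your uses, $f\le k_d$ and $\int f=1$, are valid for the true density.

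For the lower bound, your window argument is correct when $\xi\ge 1$, but the $\xi<1$ step does not prove what it claims. Writing $e^{-2}=(e^{-2}\xi)\cdot\tfrac{1}{\xi}$ does not give $\Omega(1/\xi)$, because the prefactor $e^{-2}\xi$ tends to $0$. No argument can repair this: $E(\xi)\le 1$ for all $\xi$, while $c/\xi\to\infty$ as $\xi\to 0$. So the $\Omega(1/\xi)$ bound is only true for $\xi$ bounded away from zero. You should state it for $\xi\ge 1$, which covers the regime $\xi=\xi_1^t\to\infty$ where it matters, or uniformly as $E(\xi)=\Omega(\min\{1,1/\xi\})$.

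Also note that your constant $N(N-1)c_d^2F(-\tfrac12)^{N-2}/(2e)$ is exponentially small in $N$, since $F(-\tfrac12)<\tfrac12$. That is more than the $\poly(N,d)$ slack the paper hides elsewhere. It is acceptable only if $\Omega$ is read with arbitrary $(N,d)$-dependent constants. This is harmless here, because the only visible downstream use is the upper bound, through the $O(N^3k_d^2/\xi_1^t)$ term in the proof of \cref{thm:convergence}. An $N$-uniform constant would still need the more careful window near the top order statistic that you anticipated.
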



\section{Compute Environment}\label{app:hardware}
All experiments were conducted on a single workstation running Linux (kernel 6.8.0-58-generic) with glibc 2.39. The system was equipped with a 16-core (32-thread) CPU, 128 GB of RAM, and one NVIDIA RTX 5000 Ada Generation GPU.

\end{document}